\documentclass[onecolumn,journal,draftcls,a4paper,12pt]{IEEEtran}
\IEEEoverridecommandlockouts
%% depending on your installation, you may wish to adjust the top margin:
%\addtolength{\topmargin}{9mm}

%\usepackage{setspace}
%\setstretch{1.5} 
\usepackage[utf8]{inputenc} 
\usepackage[T1]{fontenc}
\usepackage{url}
\usepackage{ifthen}
\usepackage{cite}
\usepackage[cmex10]{amsmath} % Use the [cmex10] option to ensure complicance
                             % with IEEE Xplore (see bare_conf.tex)
\usepackage{tikz,tikz-cd}
\bibliographystyle{IEEEtran}
\usepackage{cite}
\usepackage{amsmath,amssymb,amsfonts}
\usepackage[noend]{algpseudocode}
\usepackage{algorithmicx,algorithm}
\usepackage{graphicx}
\usepackage{textcomp}
\usepackage{xcolor}
\usepackage{bbm}
\usepackage{mathrsfs}
\usepackage{amsthm}
\usepackage{subfigure}
\usepackage{threeparttable}

\newtheorem{definition}{Definition}

\newtheorem{theorem}{Theorem}

\newtheorem{fact}{Fact}
\newtheorem{corollary}{Corollary}
\newtheorem{remark}{Remark}

%% Please note that the amsthm package must not be loaded with
%% IEEEtran.cls because IEEEtran provides its own versions of
%% theorems. Also note that IEEEXplore does not accepts submissions
%% with hyperlinks, i.e., hyperref cannot be used.

\interdisplaylinepenalty=2500 % As explained in bare_conf.tex

\pagestyle{headings}
\pagenumbering{arabic}

%%%%%%
% correct bad hyphenation here
\hyphenation{op-tical net-works semi-conduc-tor}

% ------------------------------------------------------------

\begin{document}
%\pagenumbering{gobble}
\title{Privacy-Preserving Communication-Efficient Federated Multi-Armed Bandits  }
%\thanks{This work was supported in part by the Hong Kong RGC grant ECS 9048149 (CityU 21212419), the Guangdong Basic and Applied Basic Research Foundation under Key Project 2019B1515120032, the NSF grant 1740047 and the UC-NL grant LFR-18-548554.}

\author{\IEEEauthorblockN{Tan Li, Linqi Song }\\
\IEEEauthorblockA{City University of Hong Kong, \\City University of Hong Kong Shenzhen Research Institute}	 \\		
%	\IEEEauthorblockA{$^{2}$University of California, Los Angeles}\\
\{tanli6-c@my., linqi.song@\}cityu.edu.hk
}

\maketitle

\begin{abstract}
Communication bottleneck and data privacy are two critical concerns in federated multi-armed bandit (MAB) problems, such as situations in decision-making and recommendations of connected vehicles via wireless. In this paper, we design the privacy-preserving communication-efficient algorithm in such problems and study the interactions among privacy, communication and learning performance in terms of the regret. To be specific, we design privacy-preserving learning algorithms and communication protocols and derive the learning regret when networked private agents are performing online bandit learning in a master-worker, a decentralized and a hybrid structure. Our bandit learning algorithms are based on epoch-wise sub-optimal arm eliminations at each agent and agents exchange learning knowledge with the server/each other at the end of each epoch. Furthermore, we adopt the differential privacy (DP) approach to protect the data privacy at each agent when exchanging information; and we curtail communication costs by making less frequent communications with fewer agents participation. By analyzing the regret of our proposed algorithmic framework in the master-worker, decentralized and hybrid structures, we theoretically show trade-offs between regret and communication costs/privacy. Finally, we empirically show these trade-offs which are consistent with our theoretical analysis. %, namely, less communication costs and higher privacy requirements lead to more regret in the federated MAB problem. 
\end{abstract}

\begin{IEEEkeywords}
Federated learning, multi-armed bandit, differential privacy, communication efficient learning
\end{IEEEkeywords}

\section{Introduction}
%Federated learning (FL), an emerging distributed machine learning paradigm, focuses on many workers collaboratively training a machine learning model while keeping the data locally\cite{konevcny2016federated}. The core principles of FL has been extended to sequential decision making problem, especially the multi-armed bandits (MAB) \cite{shi2020federated}\cite{shi2021federated} A crucial part of developing high performance algorithms for federated multi-armed bandits (MAB) is modifying individual decision-making strategies to improve performance by leveraging shared information. However, it cannot get realized before we address a number of challenges, of which the important ones are large communication overhead and privacy leaky. This can be motivated by real-world applications, for example, consider the multi-chain of stores, that make local recommendations to their customers, but then wish to aggregate the overall responses without hurting the users' personal information to provide better recommendations  \cite{song2017making} \cite{song2018itw} \cite{8849556}. Another typical application is the Internet of Vehicle, where we need to effectively collect information from each vehicle for joint path planning when communication resources are limited  \cite{lin2019uav}\cite{ xu2020collaborative}.

Federated multi-armed bandits (MAB), combining the conventional MAB model and federated learning, is an emerging framework in distributed sequential decision-making, especially for many real-world wireless distributed systems \cite{li2020federated} \cite{Shi_Shen_2021}. For example, consider wireless coupon distribution systems in chain stores that make local recommendations to their customers; they wish to aggregate the overall responses without revealing users' personal information to provide better recommendations  \cite{song2017making} \cite{song2018itw} \cite{8849556}. Another application example is the Internet of Vehicles (IoV), where we need to effectively collect information from each vehicle for joint path planning when communication resources are limited  \cite{lin2019uav} \cite{xu2020collaborative}.

Two critical concerns of federated MAB are the communication bottleneck and data privacy. A substantial number of local agents/devices periodically exchanging local model updates for model aggregation require a large amount of communication resources \cite{mills2019communication}, which is often scarce in wireless networks. Several communication efficient learning techniques have been studied recently, including local model compression \cite{haddadpour2021federated},
partial device participation \cite{konevcny2016federated}, and less frequent aggregation \cite{reisizadeh2020fedpaq}. On the other hand, exchanging learning model information while protecting data privacy at local agents involves privacy mechanism design, e.g., differential privacy \cite{abadi2016deep}. Some existing works have began studying how communication constraints and privacy requirements affect the learning performance in supervised learning \cite{hu2020cpfed}, however, these works did not look into this question in the federated bandit problem. 

In this paper, we explore \textit{how we design federated bandit algorithms and communication protocols to better inform what decisions to make at the distributed agents to achieve a certain level of data privacy under communication constraints.}

We consider different network structures for agents to exchange learning knowledge: a master-worker, a decentralized, and a hybrid structure that combines the above two. 

In the master-worker network structure, a central server collects $M$ agents' individual model parameters and returns back an aggregated result to all agents. We propose an elimination-based federated bandit algorithm and an epoch-based communication protocol. In this approach, an exponentially increasing number of time slots consist of an epoch and agents locally explore a set of active arms (out of total $K$ arms) during each epoch. At the end of each epoch, local agents send the estimated mean of each arm with additional noise sampled from a Laplace distribution with parameter $\epsilon$, in order to fulfill the differential privacy requirement. The server reduces the size of this active set by eliminating empirically inferior arms based on the aggregation result and returns elimination results to agents. \textcolor{black}{Our CDP-MAB algorithm achieves a regret of $ O(\max\{\frac{K\log(KT\log T)}{\Delta}, \frac{K\log T \sqrt{\log(KT\log T)}}{\sqrt{M}\epsilon}\}$ with a communication cost of no more than $O(c_1M\log T)$, where $c_1$ is the cost of building a server-to-agent link. This indicates a trade-off between the privacy and learning performance, the second term $\frac{K\log T \sqrt{\log(KT\log T)}}{\sqrt{M}\epsilon}$ in the regret increases with the decreasing of $\epsilon$ (higher level of privacy), i.e., inversely proportional to $\epsilon$.}
%we use the number of total uplinks to indicate the communication cost and  %and $\Delta$ is the minimum mean reward gap between the best arm and other arms. 

Furthermore, when communication constraints are posed, we investigate an efficient communication protocol that allows only a certain number of communication epochs $R$ and at each epoch allows only a fraction $p$ of users to upload learning knowledge. Again, we consider an exponentially increasing length of epochs, but scaled according to $R$. In this case, the server needs to fine-tune the elimination threshold based on insufficient collections of learning knowledge in order to gain a certain level of confidence to remove an empirically inferior arm. This algorithm achieves $\min\{M,T^{2/R}/p^{3/2}\}$ times more regret than the non-communication-constrained setting with a communication cost of $c_1pMR$. %the first case $R \geq \log(1/\Delta)$ and $p=1$. %{\linqi Why $\min\{\sqrt{p}\epsilon,p\Delta\}$?}

We next extend to a decentralized network structure where agents could exchange protected learning knowledge with their neighbors to make better decisions. We consider the communication network as an undirected graph $G(V,E)$ with vertices corresponding to the agents and edges depicting neighbor relationships. \textcolor{black}{We propose a multi-hop information propagation protocol, termed Global Information Synchronization (GIS)  protocol, to ensure that all agents can receive messages from other $M-1$ agents after each communication round, however, with a certain transmission delay depending on the network structure. Before the private information is fully synchronized, all agents exploit their locally observed best arm so far. Compared with the centralized setting, there is an additional term $O(Md_G)$ in the decentralized regret, which can be seen as caused by the  information dissemination in the network graph $G$ with diameter $d_G$}.

%efficient communication protocol that each communication link in the $d$-regular graph may be disabled (sleep mode) with probability $(1-p)$ and there can be only $R$ times communication occurs. We use epoch-wise elimination method to collectively learn the model and show that the achieved regret can be bounded by $O(\frac{M\Delta^{-2/R}\log T}{(1+pd)\min\{\epsilon,\Delta\}})$ with a communication cost $\frac{pdMR}{2}$. %{\linqi how about adding $R$?}
  
In addition, we consider a hybrid network structure combining master-worker and decentralized network together with a two-layer communication protocol. Each agent first performs local exploration and sends the protected means to a ``sink agent" inside a component (local communication). After this first-step information exchange, the second-step communication only occurs among sink agents of each component and a server (global communication). The server aggregates protected learning parameters and sends global elimination results to each agent. \textcolor{black}{Results show that hybrid structure can help to achieve communication efficiency without deteriorating the regret, by reducing both agent-to-agent and server-to-agent links}. 
%We consider a two-level privacy requirements: intra-component and inter-component, and design corresponding privacy mechanisms according to these requirements. Communication constraints (communication link inactivity or sleep) are taken into account. 

Finally, we empirically show trade-offs between learning regret and communication/privacy, which are consistent with our theoretical findings. These results also provide important insights into designing practical communication-efficient privacy-aware federated MAB systems.

\section{Related Work}
The MAB model is widely used in many applications, like recommendation systems and clinic trials, due to its simplicity and efficiency \cite{li2010contextual}\cite{zeng2016online}. Recently, privacy issues have raised concerns in the bandit studies. Early works focus on single-agent MAB problems, where several differential privacy-based bandit learning algorithms have been proposed by adding noise to partial sums of rewards \cite{tossou2016algorithms} \cite{agarwal2017price}. Given a certain level of differential privacy requirement, lower bounds of learning performance in terms of the regret have been given in \cite{ren2020multi} \cite{basu2019differential}.

{\color{black}
There has been another line of literature recently that talks about the distributed/federated MAB problems, where agents aim to collaboratively make decisions by exchanging information with others \cite{agarwal2021multi}. 
Two critical concerns of federated MAB are the communication bottleneck and data privacy. 
In our previous conference work \cite{li2020federated}, the MAB problem has been extended into a multi-agent setting where both a `master-worker' and a fully decentralized structure are studied together with a tree-based privacy preserving mechanism. In \cite{zhu2021federated,sankararaman2019social,martinez2019decentralized}, federated linear bandit problem is investigated through a decentralized network of agents via privacy preserving gossip approach. On the one hand, these works add Laplacian noise to the local estimated means at each time slot before communication, which leads to an $O(K\log^{2.5}(T))$ order privacy-related regret. Instead, we use a simple and efficient privacy-preserving mechanism to scale this term as $O(\log^{1.5} T)$. On the other hand, these works did not take into account of communication efficiency in the protocols. They force the agents to communication to (one of) their neighbors at each time slot to reach global consensus in finite time slots, which incurs $O(T)$ communication cost. 

Federated MAB under limited communication has received more attention recently \cite{agarwal2021multi}. The most common way is to achieve through the central server. In the master-worker structure, a number of agents periodically upload local parameters to the server to reduce the communication rounds. Specifically, in \cite{shi2021federatedpersonal}, a mixture bandit model to balance the generalization and personalization is studied. In \cite{Shi_Shen_2021}, a federated bandit problem with client sampling is studied with communication cost counted in the regret, yet without privacy preserving mechanisms. They implicitly bound the total communication rounds by $O(\log T)$ using an action elimination-based algorithm. Different from them, we study how the regret will be affected when the communication resource is clearly constrained, for example, when the communication round is fixed. In addition, we also proved that under the proposed decentralized and hybrid structure, the $O(\log T)$ regret can be achieved with $O(\log T)$ communication cost.

%In contrast, our work focus on studying the privacy preserving communication efficient federated bandit problem of different network structures: a master-worker structure, a decentralized structure, and a hybrid structure.
}

\section{System Model and Problem Formulation}
\label{sec:formulation} 
\subsection{Federated Multi-Armed Bandit Framework}
%We consider a federated MAB problem with $M$ agents, where we expect the decision-making performance can be improved by leveraging shared information. We allow the agents to communicate either through a central node (master-worker structure) or directly with their neighbors (networked fully decentralized structure), to aggregate their knowledge of the user preferences. All $M$ agents are associated with $K$ arms (e.g., movies, ads, news, or items) from an arm set $\mathcal{A}: =\left\lbrace  1,2,...,K\right\rbrace $.  At time slot $t$, each agent chooses and pulls an arm from the set of $K$ arms, and then the arm $k\in \mathcal{A}$ chosen by agent $i \in [M]$ generates an i.i.d. reward $r_{i,k}(t)$ from a fixed but unknown distribution at time $t$. We denote by $\mu_{i,k}$ the unknown mean of reward distribution. In our model, we consider that for all arms $1\leq k \leq K$, $\mu_{1,k}=\mu_{2,k}=...=\mu_{i,k}=...=\mu_{M,k}$, and thus in the rest of the paper we use $\mu_k$ for simplicity. 

We consider a federated bandit problem with $M$ agents in either master-worker or decentralized network structures or both. In the master-worker structure, agents communicate with a central server to train a model together; in the decentralized network structure, agents communicate directly with their neighbours via a given network structure and train local models at each agent.

All $M$ agents are associated with $K$ arms (e.g., movies, ads, news, or items) from an arm set $\mathcal{A}: =\left\lbrace  1,2,...,K\right\rbrace $.  At time slot $t$, each agent $i$ chooses to pull an arm $a_i(t)\in \mathcal{A}$. Then the arm $k\in \mathcal{A}$ chosen by agent $i \in [M]$ generates an i.i.d. reward $r_{i,k}(t)\in [0,1]$ from a fixed but unknown distribution at time $t$. We denote by $\mu_{i,k}$ the unknown mean of reward distribution. In our model, we first assume a \textit{homogeneous} reward structure, that for all arms $1\leq k \leq K$, $\mu_{1,k}=\mu_{2,k}=\cdots=\mu_{M,k}$, and thus in the rest of the paper we use $\mu_k$ for simplicity. Without loss of generality, we assume that $\mu_1$ is the best arm. Then the suboptimality gap can be defined as $\Delta_{k} : = \mu_{1} - \mu_{k}$ for any arm $k\neq1$. We also denote by $\Delta$ the minimal non-zero suboptimality gap among all $\Delta_{k}$.
\textcolor{black}{In Section VI, we extend our setting to the \textit{heterogeneous} reward structure, where $\mu_{i,k}=\mu_{j,k}$ ($i\not=j$) does not necessarily hold and agents aim to learn $\mu_k \triangleq \sum_{i\in [M]} \mu_{i,k}/M$, and discuss its theoretical results.}

%The arm that agent $i$ plays at time $t$ is denoted by $a_i(t)\in \mathcal{A}:= \left\lbrace 1,...,K\right\rbrace$. %Let $q_i(t)$ be the communication message sent by agent $i$ and $q_{-i}(t)$ be the messages received by agent $i$ at time $t$. Here, messages can be learning model parameters which will be specified later. Then the policy $\pi_i(t)$ for agent $i$ can be viewed as a mapping from the collected history set to the action set. That is, $\pi_i(t):\mathit{H}_{i}(t)\rightarrow\mathcal{A}$, where the history $\mathit{H}_{i}(t)$ gathers actions, rewards, and message exchange of the past $\mathit{H}_i(t)=\left\lbrace{(a_i(1),r_{i,a_i(1)}(1),q_{-i}(1)),...,(a_i(t-1),r_{i,a_i(t-1)}(t-1),}\right.\\\left.{ q_{-i}(t-1)) }\right\rbrace  $. 
The objective of the $M$ agents is to minimize the regret, which is defined as the expected reward difference between the best arm and the online learning policies of the agents as follows:. 
\begin{equation}
\mathit{R}(T) = TM\mu_1-\mathbb{E}[\sum_{t=1}^{T}\sum_{i=1}^{M}r_{i,a_i(t)}(t)],
%=\sum_{j=2}^{K}\Delta_j\mathbb{E}[n_j(T)]
\end{equation} 
where the expectation is taken over the randomness in the choice of arms.

%\begin{figure}[htbp]
	%\centering
%\includegraphics[width=12cm]{fig1_1.png}
%\caption{Centralized federated MAB.}
%\end{figure}

%\begin{figure}[htbp]
%	\centering
%	\includegraphics[width=10cm]{fig1_2.png}
	%\caption{Decentralized federated MAB.}
%\end{figure}

\iffalse
\begin{figure}[htbp]
	\centering
	\begin{minipage}[t]{0.48\textwidth}
		\centering
		\includegraphics[width=8cm]{../}
		\caption{Centralized federated MAB.}
		%\label{fig:probfull}
	\end{minipage}
	\begin{minipage}[t]{0.48\textwidth}
		\centering
		\includegraphics[width=7.5cm]{../}
		\caption{Decentralized federated MAB.}
		%\label{fig:rfull}
	\end{minipage}
\end{figure}
\fi

\subsection{Communication Structures and Cost}
In this section, we first talk in detail the communication network structures in federated bandits, leading to different ways of information exchanging. 

$\bullet$ {\bf Master-worker structure.} In this structure, individual agents first perform local learning (pulling some arms) according to some privacy-preserving learning strategies, and then upload protected model parameters to a central server. The central server aggregates information sent from local agents and sends the global parameters back to all agents. This will be conducted iteratively.

$\bullet$ {\bf Decentralized network structure.} This decentralized network structure is described by an undirected, connected graph $G(V,E)$, where $V$ is the set of all $M$ agents and $E$ is the set of all communication links. Agents learn locally and communicate with their neighbors iteratively to exchange their learned knowledge.

$\bullet$ {\bf Hybrid network structure.} We also consider a hybrid structure, where agents first form decentralized network structures, called components, and then components are connected with a central server. The information exchange consists of two levels: local communication where agents within a same component exchange information with the assigned ``sink agent" and global communication where components upload information to the central server and the central server sends back aggregated information to all agents.  

In our framework, we consider that communication is constrained due to the scarce of communication resources so that communication-efficient learning strategies are needed. In particular, \textcolor{black}{we consider the communication cost $C(T)$ to be the cost of building total number of (two-way) communication links when agents exchange information up to time horizon $T$}. %Correspondingly, we incorporate the two following types of communication-efficient learning strategies in our federated bandit framework.

%$\bullet$ {\bf Less Frequent Communication.} The first observation is that communicating at each time slot results in significant communication overhead and is not efficient. A natural strategy is to communicate less frequently. Given the total time horizon $T$, we set $R$ as the constraint for the number of communication rounds. Therefore, $T$ can be divided into $R+1$ epochs. During the first $R$ epochs, agents execute local learning and communicate at the end of each epoch. Notice that, a uniform length of $T/(R+1)$ may not be efficient in the federated MAB problem and the length of each epoch will be determined by the algorithm. 

%$\bullet$ {\bf Partial Participation.} Another way to reduce the communication cost is to deactivate some of the communication links. We set $p$ as the link participation rate. For the master-worker structure, we enable $N= \left\lceil pM \right\rceil$ devices to communicate with the server during each communication round. In the decentralized setting, agent $i$ receives information from each neighbor independently with probability $p$. 

{\color{black}Consider that the communication happens for a total of $R$ rounds. The $r$-th communication round contains $t_r$ time slots. At each time slot $1\leq t \leq t_r$ in the $r$-th round, $L_{r,t}$ communication links are built and each of them incurs a cost $c_{r,t,l}$ for $1\leq l\leq L_{r,t}$. Then, we have the communication cost over time horizon $T$ as:
\begin{equation}\label{cost_def}
 C(T) = \sum_{r=1}^R \sum_{t=1}^{t_r}\sum_{l =1}^{L_{r,t}}c_{r,t,l}.   
\end{equation}

Refer to the three communication structures we introduced above. We assume that the cost required to build a server-to-agent link is $c_1$, while the cost required to establish an agent-to-agent connection is $c_2$. Therefore, $c_{r,t,l}$ in Eq. (\ref{cost_def}) can take the value of either $c_1$ or $c_2$. In general, the communication cost is related to the energy consumption, available bandwidths, etc. to build the links. 
For example, in the decentralized structure, an agent-to-agent link is established in the local area network within a short distance and with low energy-consumption. The central server in the master-worker structure is deployed in the remote cloud or an edge computing platform, and it consumes more energy and bandwidth resources to establish a server-to-agent link. In this paper, we reasonably assume $c_1>>c_2$ for the hybrid structure where we first consider the peer-to-peer communication and then the peer-server communication.}

Next, we look into details of the information that is being transmitted from an agent. Let us denote by $H_i(t) = (a_i(1),r_{i,a_i(1)}(1),...,a_i(t),r_{i,a_i(t)}(t))$ the historical action-reward pairs observed by agent $i$ until time $t$. Consider at some time slot $t$ when communication happens, agent $i$ creates a message $\mathbf{I}_i(t) = f(H_i(t))$ and sends it to the server or its neighbors, where $f()$ is a privacy-preserving function of agent $i$'s history, such as a noised version of empirical reward mean or cumulative sampling numbers.

\subsection{Differential Privacy Guarantee}\label{DP}
%Differential privacy is a rigorous and highly successful definition of algorithmic privacy. %Existing work in the single agent  \cite{tossou2016algorithms}\cite{agarwal2017price} bandit setting assumes that differentially-private algorithms have actions (outputs) that are  with respect to their rewards (inputs). However, since 
%In the federated bandit setting, we consider a privacy model that aims to protect the private historical data when revealing messages $\mathbf{I}_{i}(\cdot)$ sent out by each agent.

%we consider the following privacy model. Each learner acts in isolation, that means, the individual reward sequence of agent $i$ is kept locally. However, the message $\mathbf{I}_{i}(\cdot)$ needs to be shared with the central server or neighbors, which may reveal the agents’ private information as demonstrated by model inversion attacks. Since the policy $\pi_i(\cdot)$ or $\pi_s(\cdot)$ is a function of messages generated by the history sequences. 
In the federated bandit setting, we consider a privacy model that aims to protect the private historical data when revealing messages $\mathbf{I}_{i}(\cdot)$ that are sent out by each agent. %By making the messages differentially-private with respect to the reward sequence an agent observes, we can ensure that any policy using these messages is also private with respect to the rewards obtained by other agents. We use this to define message-level differential privacy in the federated bandit setting.

\begin{definition}[$\epsilon$- differential privacy in federated bandits]
A mechanism $f(H_i(t))$ is $\epsilon$- differentially private if for all time $t$ when communication occurs, any adjacent histories $H_i(t),  H'_i(t)$, and any measurable subset $O$ of images, we have
\begin{eqnarray}
\Pr\{f(H_i(t))\in O\}\leq \Pr\{f(H'_i(t))\in O\}e^{\epsilon}
\end{eqnarray}
where the adjacent sequences $H_i(t),  H'_i(t)$ differ in at most one position.
\end{definition}

We can see that $\epsilon$- differentially private mechanism $f()$ in federated bandits that generates each message ${\bf I}_i(t)$ is individually private, regardless of the receiver’ algorithm. We next introduce the Laplace mechanism that convert some function $g(H_i(t))$ into a differentially private mechanism $f(H_i(t))$ by adding some Laplace noise  $Laplace ()$ according to the Laplace distribution (which can be multi-dimensional with each dimension being generated i.i.d.).
  
\begin{definition} [See \cite{chan2011private}]
The Laplace mechanism is defined as:
\begin{equation}
f(H_i(t)) = g(H_i(t)) + Laplace (\frac{s(g)}{\epsilon}),
\end{equation}
where $s(g)$ is the sensitivity function under the $l_1$ norm:
\begin{equation}
s(g) = \max _{H_i(t), H'_i(t)}|| g(H_i(t))-g(H'_i(t))||_1,
\end{equation}
which gives an upper bound on how much we must perturb its output in order to preserve privacy. 
\end{definition}

{\bf Our design goal is to design differentially private bandit learning algorithms in the federated bandit framework under communication constraints.}
%At time slot $t$ that communication happens, agent $i$ can construct a privacy message $\mathbf{I}^\epsilon_i(t) = <f^\epsilon_1(H_i(t)),...,f^\epsilon_L(H_i(t))>$ with function $f^\epsilon_l(\cdot)$ released by Laplace mechanism to guarantee the $\epsilon$-level DP.

\section{ Centralized Federated Multi-Armed Bandits}

In this section, we present the communication-efficient privacy-preserving algorithms for federated MAB in the master-worker structure, where a server collects $M$ agents' individual model parameters and returns back an aggregated result to all agents. Our algorithms need to consider: 1) \textit{When and what to communicate?} That is, how to design the communication protocol to balance the exploration-exploitation dilemma in federated bandit learning?  2) \textit{How to protect the privacy of messages when communication happens?} More specifically, how much noise should be added to the parameters to achieve the desired privacy level? Burying these two questions in mind, we first design a federated bandit algorithm that considers privacy requirements with sufficient communications, and then develop a federated algorithm under communication constraints.

 \subsection {Centralized Differentially Private Multi-Armed Bandit Algorithm (CDP-MAB) }
 The CDP-MAB algorithm is described in Alg.~\ref{alg:cdp}. %borrows ideas from the UCB-Improved method \cite{auer2010ucb} and we extend it to the $M$-user setting with privacy guarantees. 
 The server maintains an active arm set $I^{(\cdot)}$, initialized as $I^{(0)} = [K]$, and uses an elimination method to gradually eliminate suboptimal arms while learning the optimal arm. The algorithm operates in epochs and each of them can be divided into two sub-phases:
  
\subsubsection{Local Exploration} In epoch {$r$}, all agents receive a new arm set { $I^{(r-1)}$ } broadcast by the server. \textcolor{black}{Agents then explore all the active arms in this set for the same number of $S(r) - S(r-1)$ times (Line 4) and update the empirical means based on observed rewards. Specifically, $S(r)$ is doubling-increasing, so $S(r) - S(r-1)$ is also doubling-increasing}. %Note that, at the beginning of each new epoch, we nullify all counters and start the mean-reward estimation completely anew over the remaining set of viable arms. %Thus, a single reward plays a role in only one epoch, allowing for $\epsilon-$ DP mean-estimation in each epoch. 

\subsubsection{Communication and Aggregation}: %Re-access to previous rewards will result in privacy disclosure. 
After $S(r) - S(r-1)$ times of local explorations, the empirical mean of an active arm $k$ at agent $i$ is denoted by $\hat{x}_{i,k}(r)$, which is only calculated based on the reward at round $r$. \textcolor{black}{Each agent first transfers $\hat{x}_{i,k}(r)$ to the privacy-preserving or protected version $\hat{y}_{i,k}(r)$ with additional noise sampled from the distribution $Lap(\frac{1}{M\epsilon [S(r) - S(r-1)]})$. In order to use the previous rewards without revealing privacy, we introduce $\bar{y}_{i,k}(r)$ to record the private mean of history,
\begin{align}
    \bar{y}_{i,k}(r) &= \frac{S(r-1)}{S(r)}\bar{y}_{i,k}(r - 1) + \frac{S(r)-S(r-1)}{S(r)}\hat{y}_{i,k}(r)
\end{align}}
 Then, all $M$ workers upload the protected mean $\bar{y}_{i,k}(r),  \forall k \in I^{(r-1)}$ to the server. The server aggregates the means  and privately eliminates suboptimal arms based on the confidence interval $C(r)$ (Line 5).  If there exists only one arm (empirically best arm) in the active arm set, then all agents just play this arm until time $T$.

  \begin{algorithm}[H]
 	\caption{\textcolor{black}{Centralized Differentially Private Multi-armed Bandit Algorithm (CDP-MAB)}} 
 	\label{alg:cdp}
 	{\bf Input:}
 	Time horizon: $T$;  Privacy parameter $\epsilon$; number of agent $M$;\\ 
 	{\bf Initialization:} 
 	$t = 1$, $r=1$; $I^{(0)} = [K]$; $x_{i,k}(1) = \hat{y}_{i,k}(1) = \bar{y}_{i,k}(1) = 0$; $S(0) = 0$
 	%\hspace*{0.02in} {\bf Output:} 
 	\begin{algorithmic}[1]
 	    \While{$t\leq T$}
 		\While {$|I^{(r-1)}|>1$}
 		\State Set $\widetilde{\Delta} _r\leftarrow2^{-r}$
 		\State Set $S(r) \leftarrow
 		 \max\{\frac{8\log(8|I^{(r-1)}|r^2T)}{M\widetilde{\Delta}_r^2 }, \frac{8r \sqrt{2\log(8Kr^2T)} }{M^{1.5}\epsilon\widetilde{\Delta}_r}\}$
 		 \State Set $C(r) \leftarrow  \sqrt{\frac{\log (8|I^{(r-1)}|r^2T)}{2MS(r)}} +\frac{r\sqrt{8\log(8Kr^2T)}}{M^{1.5}\epsilon S(r)}$
 		 %\For{each client $i$ {\bf in parallel}}
 		 %\State aaa
 		 %\EndFor
 		\For{each agent $i = 1,...,M$ }
 		%\State Set $\hat{x}_{i,k}(r) = 0$, $\forall i\in[M], k\in[K] $
 		\State \textit{Local exploration:} choose arm  $k \in I^{(r-1)}$ for $\{S(r) - S(r-1)\}$ times;
 		\State Update local empirical mean $\hat{x}_{i,k}(r)$;
 		\State Calculate the privacy mean $\hat{y}_{i,k}(r) = \hat{x}_{i,k}(r) +  Lap(\frac{1}{M\epsilon [S(r) - S(r-1)]})$;
 		\State Calculate the historical mean: $\bar{y}_{i,k}(r) = \frac{S(r-1)}{S(r)}\bar{y}_{i,k}(r - 1) + \frac{S(r)-S(r-1)}{S(r)}\hat{y}_{i,k}(r)$;
 		\State \textit{Communication}: upload $\bar{y}_{i,k}(r)$ to server;
 		\EndFor	
 		\State \textit{Global aggregation} at the server: $\bar{y}_k(r)=\frac{1}{M}\sum_{i=1}^M \bar{y}_{i,k}(r)$;
 		\State Let $\bar{y}_{max}(r) = \max_{k \in I^{(r-1)}}\bar{y}_k(r)$;
 		\If {$\bar{y}_{max}(r) -\bar{y}_k((r) )\geq 2C(r)$}
 		\State \textit{Global elimination} at the server: update $ I^{(r)} = I^{(r-1)}/\{k\}$.  
 		\EndIf
       	\State $r = r+ 1, t = t+|I^{(r-1)}|[S(r) - S(r-1)]$  
 		\EndWhile
 		\State All agents pull the arm until time $T$.
 		\EndWhile

 	\end{algorithmic}
 \end{algorithm}
 
 The performance of the CDP-MAB algorithm is shown in Theorem~\ref{thm:cdp}. 
\begin{theorem}[Performance of the CDP-MAB]
\label{thm:cdp}
	Given time horizon $T$, and privacy level $\epsilon$ and the cost $c_1$ for building a server-to-agent link, for an $M$ agent $K$ arm federated bandit problem, the CDP-MAB Algorithm\\
	$\bullet$ is $M\epsilon$-differentially private;\\
	$\bullet$ incurs communication cost $C_{C}(T)= O(c_1M\log T)$;\\%\left\lceil\log (\frac{1}{\Delta})+1\right\rceil$;\\
   $\bullet$ incurs regret $R_{C}(T)$ upper bounded by 
	\begin{equation}\label{regret1}
      	\textcolor{black}{  O(\max\{\sum_{k=1}^{K} \frac{\log(KT\log T)}{\Delta_k}, \frac{K\log T \sqrt{\log(KT\log T)}}{\sqrt{M}\epsilon}\}).}
	\end{equation}
\end{theorem}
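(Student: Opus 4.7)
\emph{Privacy and communication cost.} For the $M\epsilon$-DP claim, the plan is to apply the Laplace mechanism (Definition 2) to the statistic $g(H_i(t)) = \hat{x}_{i,k}(r)$, the empirical mean over the $N_r := S(r)-S(r-1)$ fresh samples pulled in epoch $r$. Since each reward lies in $[0,1]$, changing one entry of $H_i(t)$ shifts $\hat{x}_{i,k}(r)$ by at most $1/N_r$, i.e., $s(g) = 1/N_r$. Adding $\mathrm{Lap}(1/(M\epsilon N_r))$ therefore yields $M\epsilon$-DP for $\hat{y}_{i,k}(r)$, and since each $\hat{y}_{i,k}(r)$ uses a fresh, independently drawn Laplace sample and the sample sets in different epochs are disjoint, the released $\bar{y}_{i,k}(r)$ (a deterministic linear combination of the $\hat{y}_{i,k}(r')$'s) inherits $M\epsilon$-DP for each transmission. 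The communication-cost bound is essentially a counting argument: the epoch schedule $\widetilde{\Delta}_r = 2^{-r}$ together with the doubling growth of $S(r)$ ensures that the algorithm terminates exploration in at most $r_{\max} = O(\log T)$ epochs (since $M S(r_{\max}) \ge T/K$ forces $r_{\max} = O(\log T)$), and each epoch contributes $M$ agent-to-server uplinks of unit cost $c_1$, so $C_C(T) = O(c_1 M \log T)$.

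\emph{Regret: clean event and elimination structure.} I would follow the standard successive-elimination template, adapted to the federated and private setting. Define the \emph{clean event} $\mathcal{E}$ on which, for every epoch $r$ and every active arm $k \in I^{(r-1)}$, $|\bar{y}_k(r) - \mu_k| \le C(r)$. The two summands in $C(r)$ are calibrated precisely so that $\mathcal{E}$ holds with probability $\ge 1-1/T$: the first summand absorbs the sampling error via Hoeffding's inequality applied to the $MS(r)$ i.i.d.\ rewards aggregated by the server across agents, while the second summand absorbs the aggregate Laplace noise via a tail bound (e.g., $\Pr[|\sum_{i=1}^M Z_i| > t] \le 2\exp(-t^2/(4M b^2))$ for $Z_i \sim \mathrm{Lap}(b)$ combined with a union bound over at most $|I^{(r-1)}| r^2 T$ events). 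The choice of $S(r)$ guarantees $2C(r) \le \widetilde{\Delta}_r$, so on $\mathcal{E}$ a simple triangle-inequality argument shows: (i) the optimal arm is never eliminated; and (ii) any arm $k$ with $\Delta_k > 2\widetilde{\Delta}_r$ is eliminated by the end of epoch $r$.

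\emph{Regret: pull counts and summing.} With the elimination bound in hand, the expected number of pulls of a suboptimal arm $k$ across all $M$ agents before elimination is at most $M \cdot S(r_k)$, where $r_k$ is the smallest epoch with $\widetilde{\Delta}_{r_k} \le \Delta_k/2$, i.e., $r_k = \lceil \log_2(2/\Delta_k) \rceil$. Plugging in the two-branch expression for $S(r_k)$ gives two contributions to the per-arm regret $\Delta_k \cdot M S(r_k)$: the statistical branch $\frac{8\log(KT\log T)}{\Delta_k}$ and the privacy branch $\frac{8 r_k \sqrt{\log(KT\log T)}}{\sqrt{M}\epsilon}$. Summing the first branch over the $K$ suboptimal arms yields $\sum_{k\ne 1}\frac{\log(KT\log T)}{\Delta_k}$; summing the second, after bounding $r_k = O(\log T)$, yields $\frac{K\log T \sqrt{\log(KT\log T)}}{\sqrt{M}\epsilon}$. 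Adding the $O(1)$ regret contribution from the complement $\mathcal{E}^c$ (which costs $O(TM \cdot 1/T) = O(M)$, absorbed) and taking the max of the two branches gives the stated bound (\ref{regret1}).

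\emph{Main obstacle.} The routine pieces are the Hoeffding bound and the pull-count sum. The delicate step is the concentration of the accumulated, weighted Laplace noise inside $\bar{y}_{i,k}(r)$: because $\bar{y}_{i,k}(r)$ is a telescoping mixture of all past noises, I have to control a sum of independent but heterogeneously scaled Laplace variables and match the resulting tail bound exactly to the privacy summand of $C(r)$, including the extra factor of $r$ that appears in the numerator. Doing this carefully, and verifying that the union bound over epochs and active arms is absorbed into the $\log(8|I^{(r-1)}|r^2 T)$ factor without loss, is the technical crux of the argument.
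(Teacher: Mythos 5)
Your proposal is correct and follows essentially the same route as the paper's proof: the same decomposition of $\bar{y}_k(r)$ into a sample-mean term controlled by Hoeffding and an accumulated weighted Laplace noise term controlled by the Laplace concentration bound, the same clean event with the same two-part confidence radius $C(r)=c(r)+h(r)$, the same elimination argument (optimal arm survives, arm $k$ eliminated once $\Delta_k>2\widetilde{\Delta}_{r_k}$), and the same pull-count summation $\Delta_k\cdot M S(r_k)$ over the two branches of $S(r)$. The only cosmetic difference is your communication-cost argument (bounding the number of epochs by the time budget rather than by the $\lceil\log(1/\Delta)+1\rceil$ rounds needed to isolate the best arm), and you correctly identify the weighted-Laplace-sum concentration as the technical crux.
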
 
   \begin{remark}
 
 %Our CDP-MAB algorithm achieves a regret of $O(\frac{\log T}{\min\{\epsilon,\Delta\}})$ with a communication cost of no more than $M\left\lceil\log (\frac{1}{\Delta})+1\right\rceil \approx M\log \frac{1}{\Delta}$. 

Theorem 1 indicates a trade-off between the privacy and learning performance . The regret $O(\frac{K\log T \sqrt{\log(KT\log T)}}{\sqrt{M}\epsilon})$ increases with the decreasing of $\epsilon$ (higher level of privacy), until reaching $O(\frac{K\log(KT\log T)}{\Delta})$. The communication cost $C_C(T) = O(c_1M\log T)$ is the minimum cost needed to achieve $O(\log T)$ order regret. We next show that any $C(T)$ less than $c_1M\log T$ may bring the performance loss. 
 	
\end{remark}

 \textit{proof outline}: The privacy guarantee can be directly derived from the definition of the Laplace mechanism. Since the reward of each arm is bounded by $[0,1]$, the sensitivity (difference of the mean of each arm)  of two neighboring reward sequences is less than $\frac{1}{S(r) - S(r-1)}$.  Thus, the additional noise sampled from $Lap(\frac{1}{M\epsilon [S(r) - S(r-1)]})$ leads to $M\epsilon$-level privacy. The regret is incurred by playing suboptimal arms before they are correctly eliminated. We define $r_k$ as the epoch up to which $\Delta_k$ exceeds $2\widetilde{\Delta}_{r_k}$. Then we investigate the regret incurred by the following three events:  i) Exploration of each suboptimal arm $k$ before epoch $r_k$;  ii) Fail to eliminate suboptimal arm $k$  after the epoch $r_k$, and iii) Optimal arm is eliminated by server. Note that ii) and iii) can lead to at most $MT\Delta_k$ regret for arm $k$.  We show that after epoch $r_k =   \left\lceil\log (\frac{1}{\Delta_k})+1\right\rceil$, these events will not happen with a high probability. Yet regret caused by i) is unavoidable and dominates the total regret. Note that, all arms in the active arm set $I^{(r-1)}$ are pulled for the same number of times up to $S(r)$ in epoch $r$. Thus, the exploration number of arm $k$ can be bounded by $S(r_k)$. Calculating the total amount of exploration and multiplying by the reward gaps, we achieve the cumulative regret. Since  $r_k$ communication rounds are needed to identify the suboptimal arm $k$, we in total need $ \left\lceil\log (\frac{1}{\Delta})+1\right\rceil$ rounds to identify the best arm, leading to $c_1M\left\lceil\log (\frac{1}{\Delta})+1\right\rceil \leq O(c_1M\log T)$ communication cost. The full proof is provided in detail in Appendix.~B.

\subsection{Communication-Efficient CDP-MAB}
In real world applications, $c_1$ is usually large since it is energy-consuming for the devices to establish connections with a remote server, which leads to a large $C_c(T)$. We now consider how to extend the above Alg.~\ref{alg:cdp} to the case under communication constraints by utilizing efficient communication strategies as follows: 

$\bullet$ {\bf Less Frequent Communication.} A natural strategy is to communicate less frequently. Given the total time horizon $T$, we set $R$ as the constraint for the number of communication rounds. Therefore, $T$ can be divided into $R+1$ epochs. How to determine the length of each epoch $r\in R$ is not trivial. Given a fixed communication round $R$, if agents communicate too early, then their estimations of rewards are likely to be poor, as they are based on fewer samples. Thus, they may not be able to get the desired learning knowledge after all communication rounds. On the other hand, if they communicate too late, they cannot make full use of the information of others and the regret will scale with the number of agents. As a result, choosing the right time slots for communication is crucial.

The key idea to solve this is to ensure that the algorithm can identify the best arm after the $R$-th communication round with high probability. Otherwise, the time period from the end of $R$-th communication to $T$ may bring a linear growth of regret without further exploration. The doubling-increasing epoch length in Alg.~\ref{alg:cdp} may not meet this requirement since $R$ may be less than $\log(1/\Delta)$. To tackle this, we use an exponentially increasing length of epochs, scaled according to $R$: $\widetilde{\Delta}_r = \Delta^{\frac{r}{R}}$ (replacing Line 3 in Alg.~\ref{alg:cdp}).

%During the first $R$ epochs, agents execute local learning and communicate at the end of each epoch. Notice that, a uniform length of $T/(R+1)$ may not be efficient in the federated MAB problem and the length of each epoch will be determined by the algorithm. 

 %For instance, if $R=1$, the agents only communicate once at the end of epoch 1. Given a fixed communication round, if agents communicate too early, then their estimations of rewards are likely to be poor, as they are based on fewer samples. Thus, they may not be able to get the desired learning knowledge after all communication rounds. On the other hand, if they communicate too late, they cannot make full use of the information of others and the regret will scale with the number of agents. As a result, choosing the right time slots for communication is crucial.

$\bullet$ {\bf Partial Participation.} Another way to reduce the communication cost is to deactivate some of the communication links. We set $p$ as the link participation rate. We set $p$ as the communication link participation rate and $N= \left\lceil pM \right\rceil$ agents are enabled to communicate with the server during each communication round. %$N\in \mathbb{Z}_{+}, 1\leq N \leq M$ and $\frac{1}{M}\leq p\leq 1$.  Notice $N=1$ indicates there are no interaction and each agent select actions based on local observations. 
In this way, $NS(r)$ samples can be observed at the server after the $r$-th communication round. Thus, the server needs to fine-tune the length of local exploration and elimination threshold according to $p$ in order to have a sufficient level of confidence to remove an empirically inferior arm. 

Specifically, we redesign $S(r)$ as $S^p(r) = \max\{\frac{8\log(8|I^{(r-1)}|r^2T)}{N\widetilde{\Delta}_r^2 }, \frac{8r \sqrt{2\log(8Kr^2T)} }{N^{1.5}\epsilon\widetilde{\Delta}_r}\}$ (replacing Line 4 in Alg.~\ref{alg:cdp}), and $C(r)$ as $C^p(r) =  \sqrt{\frac{\log (8|I^{(r-1)}|r^2T)}{2NS^p(r)}} +\frac{r\sqrt{8\log(8Kr^2T)}}{N^{1.5}\epsilon S^p(r)}$ (replacing Line 5 in Alg.~\ref{alg:cdp}). Compared with $S(r)$, $S^p(r)$ has a scale factor of $\frac{2^{-r}}{p\Delta^{r/R}}$ or $\frac{2^{-r}}{p^{3/2}\Delta^{r/R}}$ for the two terms inside the $\max\{\}$; so does $C^p(r)$ for its two terms compared with $C(r)$. At each communication round, the server randomly select $N$ users (i.e., $M-N$ sleep link) to upload their empirical arm means with additional noise sampled from $Lap(\frac{1}{N\epsilon [S^p(r)-S^p(r-1)})$. Then it reduces the active arm sized based on this partial aggregation as in Lines 11 to 14 in Alg.~\ref{alg:cdp}. We show the performance of the CDP-MAB algorithm under communication constraints in the following theorem.

\begin{theorem}[Performance of the CDP-MAB Algorithm Under Communication Constraints]
Given a participation rate $p$ and a limited number of communication rounds $ R$, the CDP-MAB algorithm under communication constraints,  \\
$\bullet$ is $M\epsilon$-differentially private;\\
$\bullet$ incurs communication cost $C^{p,R}_{C}(T) =   c_1\left\lceil pM \right\rceil R$;\\
$\bullet$ incurs regret $R^{p,R}_{C}(T)$ upper bounded by 
\begin{equation}\label{regret2}
%O(\frac{ \Delta^{-2/R}}{p}\max\{\sum_{k=1}^K\frac{1}{\Delta_k}, \frac{K}{\epsilon}\}\cdot  \log RKT )
      \textcolor{black}{ O(\min\{M,\frac{T^{2/R}}{p^{3/2}}\}\cdot \max \left\lbrace \sum_{k=1}^{K}\frac{\log (R^2KT)}{\Delta_k}\} ,\frac{ R\sqrt{\log (R^2KT)}}{\sqrt{M}\epsilon} \}\right\rbrace )}
\end{equation}	
	
\end{theorem}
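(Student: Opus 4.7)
The plan is to adapt the proof of Theorem 1 in three stages, handling privacy, communication cost, and regret, while carefully tracking the effect of (i) the $R$-scaled exponential schedule $\widetilde{\Delta}_r=\Delta^{r/R}$ and (ii) the partial participation $N=\lceil pM\rceil$. For privacy, I would argue exactly as in Theorem 1: the per-round empirical mean $\hat{x}_{i,k}(r)$ has $\ell_1$-sensitivity $1/[S^p(r)-S^p(r-1)]$ in the reward history, and adding independent Laplace noise of scale $1/(N\epsilon[S^p(r)-S^p(r-1)])$ makes the released message $\hat{y}_{i,k}(r)$ into an $N\epsilon$-DP release; since the rounds use disjoint fresh samples, parallel composition keeps the per-agent guarantee at $N\epsilon$, and $N\le M$ yields $M\epsilon$-DP. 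The history-averaged $\bar{y}_{i,k}(r)$ is a post-processing of the private releases, so it inherits privacy. The communication cost is immediate: $R$ rounds, each using $N=\lceil pM\rceil$ server-to-agent links of cost $c_1$, giving $C^{p,R}_C(T)=c_1\lceil pM\rceil R$.

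For the regret, I would mimic the three-event decomposition from Theorem 1. For each suboptimal arm $k$, define $r_k$ as the smallest index with $2\widetilde{\Delta}_{r_k}\le\Delta_k$; since $\widetilde{\Delta}_r=\Delta^{r/R}$ changes by a factor $\Delta^{1/R}$ per round, this gives the key trapping bound $\Delta^{1/R}\Delta_k/2 \le \widetilde{\Delta}_{r_k}\le\Delta_k/2$. Using a Hoeffding bound on the $N$-agent sum of $S^p(r)-S^p(r-1)$ fresh samples together with the standard Laplace-sum tail for the injected noise (whose aggregate scale is $\sqrt{N}/(N\epsilon[S^p(r)-S^p(r-1)])$), I would show that the event $\{|\bar{y}_k(r)-\mu_k|>C^p(r)\}$ fails with probability at most $1/(r^2T)$ for every $r$ and $k$; a union bound then makes the ``fail to eliminate'' and ``eliminate the optimal arm'' events contribute $O(1)$ regret overall, by the same argument as Theorem 1. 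The dominant contribution comes from exploration before round $r_k$: each agent pulls arm $k$ exactly $S^p(r_k)$ times, so arm $k$ contributes $M\Delta_k S^p(r_k)$ regret. Substituting the definition of $S^p(r_k)$ and the trapping bound gives
\begin{equation*}
M\Delta_k S^p(r_k)\;\le\; \frac{32 M}{N}\,\Delta^{-2/R}\frac{\log(R^2KT)}{\Delta_k}\;+\;\frac{16 M\, R}{N^{3/2}}\,\Delta^{-1/R}\frac{\sqrt{\log(R^2KT)}}{\epsilon}.
\end{equation*}
Using $N=\lceil pM\rceil\ge pM$, and the standard reduction $\Delta\ge 1/\sqrt{T}$ (arms with smaller gap contribute $o(\sqrt{T})$ regret and are absorbed into a constant), both $\Delta^{-2/R}$ and $\Delta^{-1/R}$ are bounded by $T^{2/R}$, while $M/N$ and $M/N^{3/2}$ are bounded by $1/p^{3/2}$. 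Summing over $k$ yields the $T^{2/R}/p^{3/2}$ branch inside $\min\{\cdot,\cdot\}$.

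The $M$ branch of $\min\{M,T^{2/R}/p^{3/2}\}$ I would obtain by a trivial fallback: whenever the federated bound is worse, each of the $M$ agents can be analyzed as if it ran a single-agent differentially private elimination bandit (the communication rounds can only reduce its regret on expectation), yielding an $O(M\log T/\Delta)$ upper bound via the standard single-agent private-MAB analysis, which matches the first branch. Taking the minimum of the two bounds and combining with the max over the two terms of $S^p(r_k)$ yields~\eqref{regret2}. I expect the main technical obstacle to be the concentration step for $C^p(r)$: because only $N$ of the $M$ agents contribute in each round and the per-round noise has been retuned to $1/(N\epsilon[S^p(r)-S^p(r-1)])$, one has to verify that the aggregated $\bar{y}_k(r)$ (a weighted average over past rounds with heterogeneous noise scales) still concentrates at rate $C^p(r)$ stated in the algorithm, and that the scaling $\Delta^{-2/R}$ (rather than a larger polynomial in $1/\Delta$) really is the right cost of the $R$-times-only communication schedule---this is where the careful choice $\widetilde{\Delta}_r=\Delta^{r/R}$ becomes essential, and it is the source of the $T^{2/R}$ factor that does not appear in Theorem 1.
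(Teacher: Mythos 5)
Your proposal follows essentially the same route as the paper's Appendix~C: the same high-probability event $E_R$ established via Hoeffding plus the Laplace concentration bound, the same trapping inequality $2\widetilde{\Delta}_{r_k}\ge \Delta^{1/R}\Delta_k$ obtained from arm $k$ surviving round $r_k-1$, the same substitution into $S^p(r_k)$ with $\Delta^{-2/R}\le T^{2/R}$ and $M/N^{3/2}\le p^{-3/2}$, and the same single-agent fallback to justify the $\min\{M,\cdot\}$ branch. The one slip is your gap cutoff: with $\Delta_k\ge 1/\sqrt{T}$ the discarded small-gap arms can each contribute up to $M\sqrt{T}$ regret (not a constant, and potentially larger than the claimed bound when $R$ is large), whereas the paper truncates at $\Delta=O(1/T)$, under which each such arm costs $O(1)$ per agent and the bound $\Delta^{-2/R}\le T^{2/R}$ still goes through.
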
	
\begin{remark}
 $\frac{T^{2/R}}{p^{3/2}}$ can be seen as the performance loss term due to the limited communication. When $p = 1$ and $R$ is larger than $O(\log T)$, we can recover $R_{C}(T)$. If each agent runs CDP-MAB separately, we can obtain $MR_C(T)$ regret. In order to make our performance not worse than the non-communication case, we set the $\min\{M,\frac{T^{2/R}}{p^{3/2}}\}$ operation, which indicates that  $T^{2/R} /p^{3/2}\leq  M\rightarrow R \geq \frac{4\log T}{3 \log pM} $.
 \end{remark}
\textit{proof outline}: Similar to Theorem~\ref{thm:cdp}, we first prove that the event ${E_r}= \{|\bar{y}_k(r)-\mu_k| < {C^p(r)}\} $ occurs for each epoch $r$ with high probability according to the Hoeffding bound and the concentration of the Laplace distribution. We further argue that when $E_r$ holds for all epoch $r$, the best arm will not be eliminated from the active arm set and the suboptimal arm $k$ will be eliminated after epoch $r_k$ when $\Delta_k$ exceeds $2\Delta^{r_k/R}$. Thus, we need $\left \lceil  \frac{R(1+\log(\frac{1}{\Delta_k}))}{\log(\frac{1}{\Delta})}\right \rceil $ rounds to identify suboptimal arm $k$ and $R$ communication rounds to identify the best arm. Calculating the total amount of explorations before $r_k$ and multiplying by the reward gap $\Delta_k$, we achieve the cumulative regret. The full proof is in Appendix~C.

%\begin{remark}
	 %This algorithm achieves a regret of $O(\frac{\Delta^{-2/R}K\log T}{p\min\{\Delta,\epsilon\}})$ with a communication cost of $pMR$. Since $\Delta^{-1/R}$ is less than $\Delta^{-2/R}$, it suggests that the communication-efficient CDP-MAB algorithm achieves $\Delta^{-2/R}/p$ times more regret than the case without taking into account communication efficiency ($R \geq M\log(1/\Delta)$ and $p=1$).
%\end{remark}

\subsection{Performance Analysis}

\subsubsection{Privacy-regret trade-off} 
The final order of $R_{C}(T)$ is determined by the relationship between $\sqrt{M}\epsilon/\sqrt{\log (T)}$ and the smallest suboptimal gap $\Delta$. If $\Delta<\frac{\epsilon \sqrt{M}}{\sqrt{\log (T)}}$, the first term dominate and we achieve $O(\frac{\log T}{\Delta})$ regret. Otherwise, we obtain a $O(\frac{\log ^{1.5} T}{\epsilon \sqrt{M}})$ regret. Indeed, this is determined by the two terms of $S(r)$. The first term in $S(r)$ can be considered as the number of samples needed to make $\bar{x}_k(r)$ concentrated with  $\mu_k$  within a certain confidence level. The second term can be treated as the number of samples to eliminate the effect caused by the added noise $\bar{y}_k(r)-\bar{x}_k(r)$. Clearly, if we require a stronger privacy level (smaller $\epsilon$), more noise needs to be added on $\hat{x}_{k}$. Then there is a larger difference between $\hat{x}_{k}$ and $\hat{y}_{k}$ and hence $S(r)$ is mainly determined by $\epsilon$. %In the communication-efficient algorithm, we can see that the amount of noise added by the algorithm is reduced since each agent reduces the chance of exposing privacy to the server with participation rate $p$.

\subsubsection{Communication-regret trade-off} Here, $p$ and $R$ both indicate the trade-off between communication and learning performance. Specifically, they result in sub-linear and exponential deterioration terms. When $R$ is larger than $O(\log T)$, %$\left\lceil\log (\frac{1}{\Delta})+1\right\rceil$, 
the terms $T^{2/R}$ and $T^{1/R}$ turn to be a constant and $p^{-{3/2}}{R_{C}(T)}$ regret can be achieved. When $p=1$, the worst-cast regret is still $MR_C(T)$ with $R = 3\log T/4(\log M)$. %When $p=1$, the performance scales as ${T^{2/R}R_{C}(T)}$. %The worst case regret is with $R=1$, representing the case that the server needs to identify the best arm through only one-round communication.
 
%\newpage
%~
%\newpage

\section{ Decentralized Federated Multi-Armed Bandits}

In this section, we extend our CDP-MAB to the decentralized setting. We consider the communication network as an undirected graph $G(V,E)$ with vertices corresponding to the agents and directed edges depicting neighbor relationships. Without central coordination, the agents remove the inferior arms after aggregating information from their neighbors. However, the irregular connections lead to different local aggregation results after communication round $r$. Thus, each agent obtains the unequal-length active arm sets. This results in the \textit{asynchronous local exploration} phases among the agents at the start of epoch $r+1$. To avoid this, we need to ensure that: i) each agent has the same number of active arms at the beginning of each epoch. ii) the agents pull each active arm the same number of times during the local exploration phases. The key to achieve this is to make all agents reach the \textit{global consensus} through multiple information exchanges with neighbors. 

One of the simplest ideas is to use \textit{flooding} protocol. In \textit{flooding} protocol, an agent wishing to disseminate a piece of data across the network starts by sending a copy of this data to all of its neighbors. Whenever an agent receives new data, it makes copies of the data and sends the data to all of its neighbors, except the node from which it just received the data. The algorithm finishes when all the nodes in the network have received a copy of the data. Though \textit{flooding} can converge fast, it has implosion or the overlap problems. \textit{Gossiping} is an alternative to the flooding approach that uses randomization. Instead of indiscriminately forwarding data to all its neighbors, a gossiping agent only forwards data on to one randomly selected neighbor. However, gossiping distributes information slowly and incurs large end-to-end delay. In order to achieve fast convergence and avoid repeated message transmissions, we propose the following algorithm with a GIS (Global Information Synchronization) communication protocol.

\subsection{Algorithm Description}
DDP-MAB operates at each agent in epochs and each of them can be divided into following sub-phases:

\subsubsection{Local exploration}: Each agent $i$ perform $S(r)-S(r-1)$ times local exploration for each arm $k\in I^{(r-1)}$, update the empirical mean $\hat{x}_{i,k}(r)$ and transfer it to the private version. Noting, we chose the same $S(r)$ as we set in the CDP-MAB. %After that, we utilize the following GIS protocol to make the private information quickly disseminate in the network.

\subsubsection{\textcolor{black}{GIS communication protocol}}

\textcolor{black}{The communication round starts after all agents finish their local exploration and ends when they receive private means from other $M-1$ agents. We assume there is an additional synchronization clock to inform the start and end of each communication round by monitoring the status of each agent $i$'s observation list $l_i^r(\cdot)$, which collects historical records of reward and arm selection information received from other agents.}
%with each stage corresponding to one of the messages described as following:
%\begin{itemize}
%\item \textbf{ADV} - new observation notification. When an agent has new observation to share, it can advertise this by transmitting an ADV message containing the agent labels.
%\item \textbf{REQ} - request for data. Agents can send an REQ message when it wishes to receive some observations from the ADV senders. 
%\item \textbf{DATA} - private observation message. DATA message contains the private means with the agent labels in the received REQ messages.
%\end{itemize}}

\textcolor{black}{Each communication round may contain \textit{multiple} time slots, each of them includes three handshake stages (ADV-REQ-DATA) for message exchange. For each agent $i$, the communication slot $n$ starts when it obtains new observations (private means) that it is willing to disseminate. It does this by sending a message ADV$_i(n)$ to its neighbors $j\in N_i$, naming the agent labels (ADV stage). Upon receiving an ADV$_i(n)$, the neighboring node $j$ checks to see whether it has already received the advertised observations. If not, it responds by sending an REQ$_j(n)$ message for the missing observations back to the sender $i$ (REQ stage). The communication step completes when $i$ responds to the REQ$_j(n)$ with a DATA$_{i,j}(n)$ message, containing the missing observations (DATA stage).  Fig.~2 shows an example of the protocol using two-step communication and the details of the GIS protocol is shown in Alg.~2.}

\begin{algorithm}[H]
	\caption{\textcolor{black}{GIS Communication Protocol} } 
	%{\bf Input:}
	%$M$ private means vectors $\bar{\bf{y}}_i(r) = (\bar{y}_{i,1}(r),...,\bar{y}_{i,K}(r))$ \\ 
	{\bf Initialization:} 
	counter $n = 0$;
	An observation list $l_i^r(0) = \{i\}$ for all $i$.
	%\hspace*{0.02in} {\bf Output:} 
	\begin{algorithmic}[1]
\While{Communication round not ends}
    \For{each agent $i$}
	 \State \textcolor{black}{\textit{Local exploitation}: Pull the local empirical best arm once};
    \If {New observations $\bar{\mathbf{y}}_f(r)$ are added to $l_i^r(n)$}
    \State Send message $ADV_i(n) = \{f\}$ to neighbors $j\in N_i$%, which contains the agent labels of the new observation;
    \State Receive the $REQ_j(n)$ messages from $j\in N_i$. 
    \State Send $DATA_{ij}(n)$ back to serve the requests.
    \EndIf
    \If {Receive $ADV_j(n) = \{g\}$ message from $j\in N_i$ }
    \State Check whether $\bar{\bf{y}}_g(r)$ exist in $l_i^r(n)$. If not, send $REQ_i(n) =\{g\}$ back to $j$;
    \State Receive the information from $DATA_{ji}(n)$ and update the $l_i^r(n)$.
    \EndIf
    \EndFor
    \If {$|l_i^r(n)| = M$ for all $i$ }
    \State Communication round ends and return $t_{delay} = n$.
    \Else
    \State $n = n+1$.
    \EndIf
\EndWhile
	\end{algorithmic}
\end{algorithm}

\subsubsection{\textcolor{black}{Local exploitation}} \textcolor{black}{Each communication round lasts $t_{\text{delay}}$ time slots, which can be seen as the delay suffered before information synchronization. Instead of waiting in the communication round, all agents are required to pull the best empirical arm based on local observation for $t_{\text{delay}}$ times}. 

\subsubsection{Local aggregation and elimination} \textcolor{black}{When communication ends, each agent $i$ generates the aggregated mean $\bar{y}_{k}(r)= \frac{1}{M}\sum_{j\in M}\bar{y}_{j,k}(r)$ for each arm $k$. After that, each agent $i$ performs local elimination with $C(r)$ (Lines 9 and 10 in Alg.~3)}. 
\begin{figure}[H]
	\centering
	\includegraphics[width=1\linewidth]{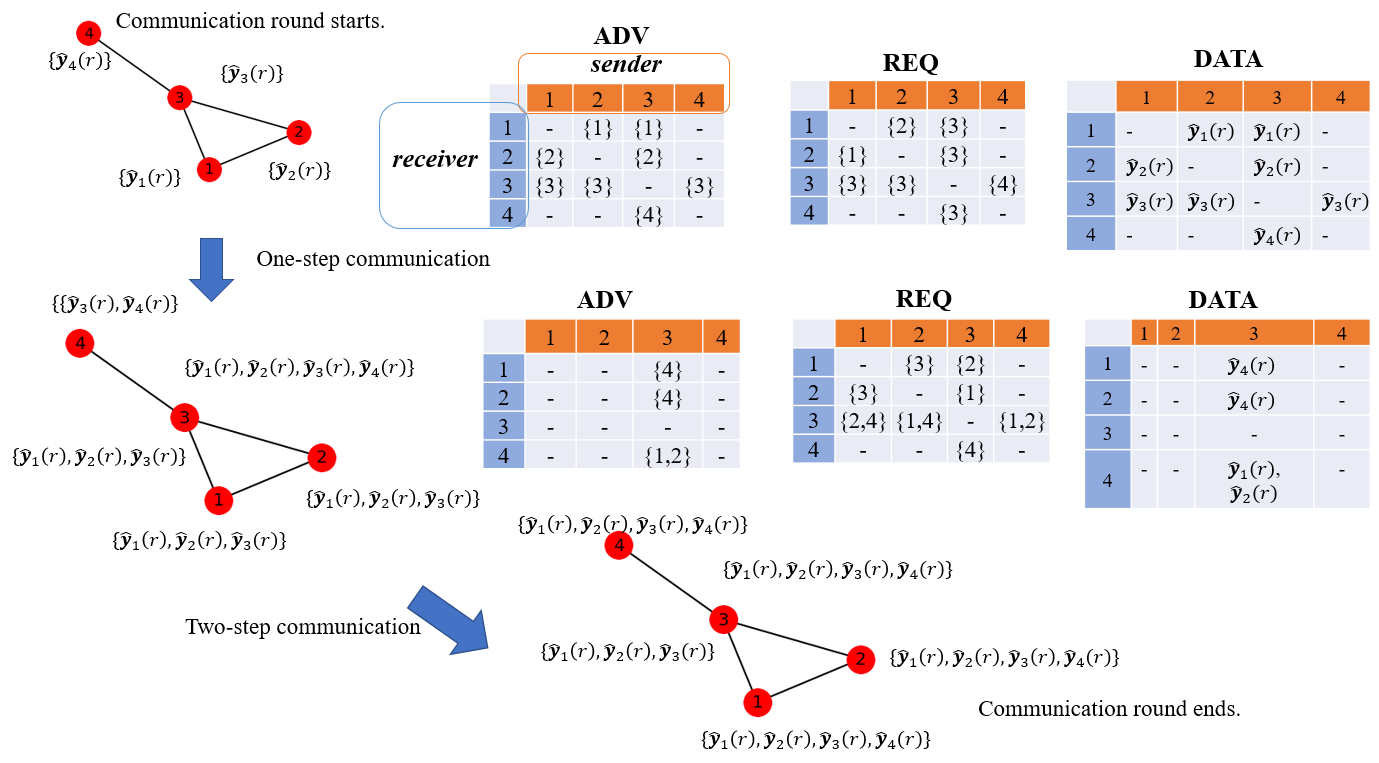}
	\caption{An example of GIS communication protocol.}
	\label{hybrid}
\end{figure}
 \begin{algorithm}[H]
	\caption{\textcolor{black}{Decentralized Differentially Private Multi-armed Bandit Algorithm (DDP-MAB)}} 
	
	{\bf Input:}
	Time horizon: $T$;  Privacy parameter $\epsilon$; number of agents $M$; communication graph $G(V,E)$;\\ 
	{\bf Initialization:} 
    $t = 1$, $r=1$; $I^{(0)} = [K]$; $x_{i,k}(1) = \hat{y}_{i,k}(1) = \bar{y}_{i,k}(1) = 0$; $S(0) = 0$
	%\hspace*{0.02in}\bf Output:} 
	\begin{algorithmic}[1]
	\While{$t<T$}
		\While {$|I^{(r-1)}|>1$} 
		\State $\widetilde{\Delta} _r\leftarrow 2^{-r}$ 
		\State Set $S(r) \leftarrow
 		 \max\{\frac{8\log(8|I^{(r-1)}|r^2T)}{M\widetilde{\Delta}_r^2 }, \frac{8r \sqrt{2\log(8Kr^2T)} }{M^{1.5}\epsilon\widetilde{\Delta}_r}\}$
 		 \State Set $C(r) \leftarrow  \sqrt{\frac{\log (8|I^{(r-1)}|r^2T)}{2MS(r)}} +\frac{r\sqrt{8\log(8Kr^2T)}}{M^{1.5}\epsilon S(r)}$
		\For{each agent $i = 1,...,M$} 
		\State \textit{Local exploration}: Run Line 7 - 10 in Alg.1;
	%\If{Synchronization clock not ends}
		%choose arm  $k \in I^{(r-1)}$ for $\{S(r) - S(r-1)\}$ times, update 
		%\Statex ~~~~~~~~~~~~$\hat{x}_{i,k}(r)$ and calculate the privacy mean $\hat{y}_{i,k}(r) = \hat{x}_{i,k}(r) +  Lap(\frac{1}{M\epsilon [S(r) - S(r-1)]})$ 
		%\For{$t\in [t_r', t_r'+t_{delay}] $ }
		\State \textcolor{black}{ \textit{Communication}: Run \textit{GIS Communication Protocol}};
		%\State Synchronization clock = $t_{\text{delay}}$;
		\State \textit{Aggregation} at agent $i$: $\bar{y}_{k}(r)=\frac{1}{M}\sum_{j\in M}\hat{y}_{j,k}(r)$
		\State \textit{Elimination }at agent $i$: Remove  $k$ from $I^{(r-1)}$ if $\bar{y}_{max}(r) -\bar{y}_{k}((r) )\geq 2C(r)$	
	    \EndFor
		\State $t =I^{(r-1)}(S(r) - S(r-1)) +t_{delay}$, $r = r+1$
		\EndWhile
		\State All agents pull the arm until time $T$.	
		\EndWhile
	\end{algorithmic}
\end{algorithm}

 \begin{theorem}[Performance of DDP-MAB]
 	Given time horizon $T$ and privacy parameter $\epsilon$, the cost for building an agent-to-agent link $c_2$, for the $M$ agents equipped with $K$ arms communication over the graph $G$, Algorithm 3\\
 	$\bullet$ is $(M\epsilon)-$ differentially private;\\
 	$\bullet$ \textcolor{black}{incurs communication cost $C_{D}(T) = O( c_2d_G\log T \sum_{i=1}^M d_i/2)$};\\
 	$\bullet$ \textcolor{black}{incurs regret $R_{D}(T)$ upper bounded by}
\begin{equation}\label{regret5}
    \textcolor{black}{O(\max\{\sum_{k=1}^{K} \frac{\log(KT\log T)}{\Delta_k}, \frac{K\log T \sqrt{\log(KT\log T)}}{\sqrt{M}\epsilon}\}+ M(d_G-1))}
 	\end{equation}
 \end{theorem}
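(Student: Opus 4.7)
The plan is to mirror the three-part structure of Theorem 1's proof and isolate the new ingredients introduced by the decentralized graph $G$ (the GIS dissemination delay and the per-epoch edge traffic), so only the delay-induced regret and the edge-based communication accounting are genuinely new.

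\emph{Privacy.} The argument is identical to that of Theorem 1. At epoch $r$, each agent $i$ releases $\bar{y}_{i,k}(r)$, which is a deterministic post-processing of the noised per-epoch mean $\hat{y}_{i,k}(r)=\hat{x}_{i,k}(r)+Lap\!\left(\tfrac{1}{M\epsilon[S(r)-S(r-1)]}\right)$. Because $\hat{x}_{i,k}(r)$ is an average of $S(r)-S(r-1)$ rewards bounded in $[0,1]$, its $\ell_1$-sensitivity to replacing one reward is $1/[S(r)-S(r-1)]$, so the Laplace mechanism delivers $M\epsilon$-DP per agent, and standard post-processing plus DP composition across epochs preserves this (the partitioning into disjoint per-epoch samples makes the composition tight).

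\emph{Communication cost.} I would bound the cost of a single round $r$ of the GIS protocol and then multiply by the number of rounds. The GIS protocol is essentially controlled flooding with the ADV/REQ/DATA deduplication in Lines 4--11 of Alg.~2, so each of the $M$ distinct private means $\bar{y}_{j,k}(r)$ traverses every undirected edge at most a constant number of times (once in each direction, together with its ADV handshake). Summing over agents and edges gives $O\!\big(c_2\sum_{i=1}^M d_i/2\big)$ link-cost per round in the worst case, and the synchronization completes within $t_{\text{delay}}\le d_G$ time slots because a message reaches every vertex within $d_G$ hops on a connected graph. Invoking the same epoch bound as in Theorem 1, namely at most $O(\log T)$ rounds until the active set is a singleton, yields $C_D(T)=O\!\big(c_2 d_G\log T\sum_i d_i/2\big)$. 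The $d_G$ factor in the statement comes from the delay budget per round, which I would carry through explicitly.

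\emph{Regret.} Decompose $R_D(T)$ into (i) exploration regret, (ii) failure-event regret, and (iii) a new delay-exploitation regret. For (i) and (ii), I would re-use the proof of Theorem 1 verbatim: the choice of $S(r)$ and $C(r)$ in Lines 4--5 of Alg.~3 is identical to CDP-MAB, and after aggregation every agent computes the same $\bar{y}_k(r)$, so under the good event $E_r=\{|\bar{y}_k(r)-\mu_k|<C(r)\;\forall k\}$ each agent's local elimination agrees with the centralized elimination in Theorem 1. Hoeffding for the empirical means plus the Laplace concentration bound $\Pr\{|Lap(b)|\ge x\}\le e^{-x/b}$ give $\Pr(E_r^c)\le 1/(r^2 T)$, so a union bound over epochs shows both the ``suboptimal arm survives past $r_k=\lceil\log(1/\Delta_k)+1\rceil$'' and ``optimal arm eliminated'' events contribute $o(1)$ regret, and the dominant exploration piece is $O\!\big(\max\{\sum_k\log(KT\log T)/\Delta_k,\; K\log T\sqrt{\log(KT\log T)}/(\sqrt M\epsilon)\}\big)$ exactly as in (\ref{regret1}).

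\emph{The main obstacle} is controlling (iii), the delay-exploitation regret, and making it collapse to $O(M(d_G-1))$ rather than $O(M d_G\log T)$. My plan is: under the global good event $\bigcap_r E_r$, after the first successful aggregation round the local aggregated means satisfy $\bar{y}_k(r)<\bar{y}_1(r)$ for every suboptimal $k$ still in the active set at every agent, so the ``empirical best arm'' used in the Line-3 local exploitation of Alg.~2 coincides with arm $1$ and contributes zero instantaneous regret. Consequently only the very first communication round, during which agents have not yet synchronized and may exploit an arbitrary arm, contributes; that round lasts at most $d_G-1$ hops and involves $M$ agents, yielding at most $M(d_G-1)$ units of regret. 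The complement event contributes $O(1)$ by the union bound above. Combining (i)--(iii) gives the stated regret. I would then double-check that the asymmetric information states across agents during the flooding do not break the ``empirical best agrees with $1$'' argument by noting that the GIS protocol terminates only when every $l_i^r(n)$ contains all $M$ items, so at the moment of the local elimination (Line 10 of Alg.~3) every agent holds the same aggregate.
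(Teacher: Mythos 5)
Your privacy, communication-cost, and exploration-regret arguments all track the paper's proof closely and are fine: the cost is bounded as (edges per slot $\sum_i d_i/2$) $\times$ (at most $d_G$ slots per round) $\times$ ($O(\log T)$ rounds), and the exploration term reduces to Theorem 1 because GIS guarantees every agent ends each round holding the same aggregate, hence the same active set.

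The genuine gap is in your handling of the delay-exploitation term (iii). You claim that under the good event, after the first aggregation round the local empirical best arm coincides with arm $1$, so all subsequent exploitation pulls contribute zero regret and only the first round's $M(d_G-1)$ pulls matter. This is false: the event $E_R=\{|\bar{y}_k(r)-\mu_k|\le C(r)\}$ only gives $\bar{y}_1(r)-\bar{y}_k(r)\ge \Delta_k-2C(r)$, which can be negative for any arm still surviving in $I^{(r-1)}$ (that is exactly why the elimination rule requires a margin of $2C(r)$ rather than a bare comparison of means). A near-optimal surviving arm can have the largest empirical mean in every epoch, so every communication round contributes exploitation regret, not just the first. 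The paper closes this differently: under $E_R$, every arm still active at the start of round $r$ has gap at most $2^{-(r-1)}$, so each of the at most $M(d_G-1)$ exploitation pulls in round $r$ costs at most $2^{-(r-1)}$, and the geometric sum
\begin{equation}
M(d_G-1)\sum_{r\ge 1}2^{-(r-1)} \le 2M(d_G-1)
\end{equation}
collapses to $O(M(d_G-1))$ without any $\log T$ factor (with the complement of $E_R$ handled by its $1/T$ probability). You reached the right bound, but your mechanism for avoiding the $O(Md_G\log T)$ blow-up does not work; you need the shrinking-gap argument on the surviving active set rather than an identification of the exploited arm with arm $1$.
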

where $d_G$ is the diameter of $G$ and $d_i$ is the degree of agent $i$.

\textit{proof outline}: 
The communication cost is determined by the number of agent-to-agent links established in all communication rounds. Since i)All agents can synchronously identify the best arm after $\left \lceil \log (\frac{1}{\Delta_k}) +1 \right \rceil $ rounds; ii) Each communication rounds last $t_{\text{delay}}$ time slots, upper bounded by the diameter of the graph $d_G$; iii) In each time slot, at most  $\sum_{i=1}^M d_i/2$ connections are established. Multiply these three items we can conclude $C_{D}(T)$.

The first term in regret is caused by local explorations of all $M$ agents before they eliminate all suboptimal arms, which recover $R_{C}(T)$ by the same choice of $S(r), C(r)$. The second term is incurred by local exploitation. The communication round $r$ contains at most $d_G-1$ time slots. If all $2^{-(r-1)} $ suboptimal arms are successfully eliminated in the previous round, and arm 1 is not eliminated, at most $M(d_G-1)2^{-(r-1)}$ regret will be introduced. If arm 1 is eliminated, then at most $Md_G$ regret will be introduced.  We complete the proof by summing up the regrets incurred by all required communication rounds. The detailed proof can be found in Appendix. D.

To constrain the total communication round as $R$ to reduce the communication cost, we can just set $\widetilde{\Delta} _r\leftarrow \Delta^{r/R}$ in Line 3 of Alg.3, which leads to the following results: 
\begin{corollary}
DDP-MAB with communication round constrain $ R $ achieves the regret of  $$ R^R_D(T) = O(\min\{M,{T^{2/R}}\}\cdot \max \left\lbrace  \sum_{k=1}^{K}\frac{\log (R^2KT)}{\Delta_k} ,\frac{ R\sqrt{\log (R^2KT)}}{\sqrt{M}\epsilon} \right\rbrace   +  M(d_G-1))$$% Md_G( \frac{1-\Delta}{1-\Delta^{1/R}}+ \frac{R}{2T}) 
with communication cost $C^R_D(T) = c_2d_GR\sum_{i\in[M]}d_i/2$.
\end{corollary}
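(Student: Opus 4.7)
The plan is to combine the communication-constraint technique from Theorem 2 with the decentralized delay analysis from Theorem 3. Only the schedule of $\widetilde{\Delta}_r$ is modified (from $2^{-r}$ to $\Delta^{r/R}$), which then propagates into $S(r)$ and $C(r)$ through Lines~4--5 of Alg.~3; the GIS protocol and the Laplace perturbation at each agent are untouched.

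First I would verify that the privacy guarantee carries over unchanged. Since the Laplace noise added at each agent depends only on $S(r) - S(r-1)$ and $\epsilon$, and the sensitivity computation of Theorem 3 does not rely on the particular form of $\widetilde{\Delta}_r$, the modified algorithm is still $M\epsilon$-differentially private. For the communication cost, each of the $R$ rounds runs the GIS protocol, which lasts at most $t_{\text{delay}} \le d_G$ time slots with at most $\sum_{i \in [M]} d_i/2$ agent-to-agent links established per slot. Multiplying gives $C^R_D(T) = c_2 d_G R \sum_{i \in [M]} d_i/2$, exactly as stated.

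For the regret, I would mirror Theorem 2's event decomposition. Define the high-probability event $E_r = \{|\bar y_{k}(r) - \mu_k| < C(r)\ \forall k \in I^{(r-1)}\}$; by Hoeffding combined with a Laplace-concentration bound (identical to those used in Theorems 1 and 2), $\Pr[E_r^c]$ is at most a polynomially small quantity in $KrT$. On $\bigcap_{r\le R} E_r$, the optimal arm is never eliminated, and any arm $k$ with $\Delta_k > 2\widetilde{\Delta}_r = 2\Delta^{r/R}$ is eliminated at the latest by round $r_k = \lceil R(1 + \log(1/\Delta_k)) / \log(1/\Delta) \rceil$. Plugging the new $S(r)$ into the per-arm exploration count and summing over $r \le r_k$ yields the local-exploration regret, which inherits the $\min\{M, T^{2/R}\}$ blow-up factor from Theorem 2 (the $p^{-3/2}$ factor from Theorem 2 is absent here because every agent participates in every round).

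The additive $M(d_G-1)$ term is the local-exploitation regret accumulated during the up-to-$(d_G-1)$ delay slots of each communication round, in which all $M$ agents pull their current empirical best arm. On $E_{r-1}$ that arm is at most $2\widetilde{\Delta}_{r-1}$-suboptimal, so the per-round exploitation contribution is bounded by $2M(d_G-1)\widetilde{\Delta}_{r-1}$; summing the resulting geometric series over $r=1,\ldots,R$ collapses to $O(M(d_G-1))$. The main obstacle I anticipate is keeping this exploitation bound independent of $R$ when $R$ is small and $\Delta$ is not too tiny, since the decay of $\widetilde{\Delta}_r = \Delta^{r/R}$ is slower than the dyadic decay in Theorem 3. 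I would handle this by the same capping trick that produced Theorem 2's $\min\{M, T^{2/R}\}$: whenever the geometric sum fails to be $O(1)$, the crude bound of $M$ per round (just pull anything) is tighter, and any surplus $R$-dependence is absorbed into the leading exploration term rather than left in the additive $M(d_G-1)$. Combining the three components yields the claimed bound.
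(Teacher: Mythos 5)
Your proposal follows essentially the same route as the paper's Appendix~E: the exploration term is obtained by rerunning Theorem~2's analysis with $p=1$, the communication cost by multiplying the $R$ rounds by the per-round GIS link count $d_G\sum_{i}d_i/2$, and the exploitation term by charging each of the at most $d_G-1$ delay slots of round $r$ with $O(\widetilde{\Delta}_{r-1})$ per agent and summing the geometric series $\sum_{r}\Delta^{(r-1)/R}$. The one step you flag as an obstacle is in fact where the paper's own argument is weakest: it asserts $\frac{1-\Delta}{1-\Delta^{1/R}}\le 1$, which is backwards (this ratio is at least $1$ and scales like $R/\log(1/\Delta)$ for large $R$), so your instinct that the geometric sum need not be $O(1)$ is sound --- though your proposed ``capping'' fix is left vague and would need to be made precise before the additive term can honestly be claimed as $O(M(d_G-1))$ independent of $R$.
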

The detailed proof is shown in Appendix. E.
\begin{remark}
 When $R$ is larger than $O(\log T)$, we can recover $R_{D}(T)$. To ensure our performance not worse than the non-communication case, $R \geq 2\log T /\log M$. %using the fact $\Delta > 1/T$, we have $\frac{1-\Delta}{1-\Delta^{1/R}} \leq \frac{1}{1-T^\frac{1}{\log T}} \leq 2$. Then,
%$R$ only affects the total number of communication rounds and thus the number of explorations in each round. It does not change the $t_{\text{delay}}$ of each round caused by information diffusion. When $R$ is larger than $\left\lceil\log (\frac{1}{\Delta})+1\right\rceil$, $R_{D}(T)$ can be achieved.
\end{remark}
\subsection{Performance Analysis}

\subsubsection{Communication cost}
\textcolor{black}{ In the centralized setting, each communication round only accounts for one time slot. Thus the communication cost is determined by $c_1$, the participating agents and the number of total communication rounds. In the decentralized setting, the communication cost is jointly decided by $c_2$, the total communication round and the links built during each round. If $c_1M = c_2d_G\sum_{i\in[M]}d_i/2$, the communication costs incurred by these two settings are the same.}

%the links built between agents is equal to the total number of edges $\frac{\sum_{i\in[M]}d_i}{2}$ and these connections will last at most $d_G$ time slots in each communication round. Therefore, the total communication cost is determined by the structure of the graph}. 

\subsubsection{Regret}
\textcolor{black}{ Compare with $R_{C}(T)$, there is an additional term $O(Md_G)$ in $R_{D}(T)$, which can be regarded as the extra regret incurred by the inconsistency between local estimation and global estimation. $d_G$ indicates the convergence rate of local estimation to global estimation on graph $G$. This term only depends on the agent number $M$ as well as the diameter of the graph $d_G$, and not depending on $T$.} 

\subsubsection{Trade-off}
\textcolor{black}{ There is a non-proportional trade-off relationship between the above two. Sparse graphs build fewer agent-to-agent links in each communication round, but suffer a considerable delay before information synchronization; Dense graphs have a fast convergence rate but demand more connections. The graphs having both fewer edges and smaller $d_G$, such as star or multi-star graphs that are close to the centralized setting, can achieve the best performance. We summarize the communication costs and regrets of several typical graphs in Table. I}.
%In the extreme case, the fully connected graph can recover the performance of Alg.1 with $d_G=1$ but requires $O(c_2\frac{M(M-1)}{2}\log T)$ communication cost.  
\begin{table}[h]
\caption{Communication cost and Regret of several typical graphs}
\small
\centering
\begin{tabular}{|c|c|c|c|c|}
\hline
& Star      & Ring       & Fully-connected               & d-regular               \\ \hline
$C_{D}(T)$ & $O(2(M-1)\log T$ & $O((M^2/2)\log T)$ & $O(\frac{M((M-1)}{2} \log T)$ & $O(\frac{dM\log M}{2\log (1/\lambda_2)}\log T$\footnotemark[1]      \\ \hline
$R_{D}(T)$ & $R_C(T)+O(M)$    & $R_C(T)+O(M(M-1))$ & $R_C(T)$                  & $ R_C(T)+O(\frac{M\log M}{\log (1/\lambda_2)})$ \\ \hline
\end{tabular}
\end{table}

\footnotetext[1]{$\lambda_2$ is the second largest eigenvalue of the Laplace matrix of the graph. We omit $c_2$ in $C_D(T)$ for all graphs.}

\subsubsection{Discussion}
\textcolor{black}{$R_{D}(T)$ achieves the similar form of regret as in \cite{zhu2021federated}. That is, the decentralized regret is equal to the centralized regret plus another graph-related term. In our work, this term is determined by $M$ and the diameter of the graph $d_G$, while in \cite{zhu2021federated}, this term is determined by $M$ and the eigenvalue of the graph Laplacian matrix. In fact, both of these two terms indicate the convergence rate of local estimation to global estimation on specific graph structures. %and both can be regarded as the extra regret incurred by the inconsistency between local estimation and global estimation. 
Although we can finally achieve a consistent regret using different communication protocols, their method requires $O(T)$ communication cost which is not communication-efficient. Besides, the privacy mechanism they used results in a $O(K\log^{2.5}(T)/\epsilon)$ regret. Compared to them, we only need $O(\log T)$ communication cost and the privacy-related regret is upper bounded by $O(K\log^{1.5}(T)/(\epsilon\sqrt{M}))$. This also demonstrates the trade-off between communication and privacy: more frequent communications require more noise to be added, which further leads to a larger regret.}

\iffalse
\textcolor{black}{ Theorem 3 achieve the similar regret form with \cite{zhu2021federated}, that is, the regret equivalent to centralized setting plus another addable term, related to the graph structure. We illustrate the technical difference in following two aspects:}

\textcolor{black}{ $\bullet$ \textit{Communication efficiency}. The gossip protocol require each agent to exchange messages with one of the neighbors at each time slot, after the arm selection. Therefore, the amount of communicated cost is directly proportional to the time horizon , i.e., $O(T)$. Instead, our communication only happens after a period of local exploration, thus the cost is $O(\log T)$ order. In some resource-constrained environment, when the decision-making process continues for a long time, our communication protocol is more practical.}

\textcolor{black}{ $\bullet$  \textit{Privacy-regret trad-off}.  The work [A1][A3] employed the tree-based mechanism, which has been proved that the added-regret bound of the these algorithms is $O(K\log^{2.5}(T)/\epsilon)$. In our work, we replace the suboptimal tree-based mechanism with a simpler mechanism, that is, the noise only added after a period of local exploration, reaching the final added regret of $O(K\log(T)/\epsilon)$. In fact, this also demonstrates the trade-off between communication and privacy. Frequent communication means that more noise needs to be added, which further hurts the regret.}
\fi

\section{Hybrid Differentially Private Multi-Armed Bandit Algorithm }

In practice, CDP-MAB builds $M$ server-to-agents links, thereby introducing high communication cost. Although we propose a partially sampling method to achieve communication efficiency in Section IV-B, it inevitably brings performance loss. DDP-MAB is more suitable for devices in a small-size network, otherwise the delay for reaching consensus in each communication round is unacceptable. In this section, we propose a hybrid communication structure (see Fig.~3) that combines the centralized and decentralized settings. This structure is a natural extension of the classic wireless sensor network (WSN). Each sub-network in WSN contains some general sensors and a sink node, that can communicate with the WSN server through the gateway after collecting information in the sub-network. 

There are $Q<<M$ components, each of them consists $M_q$ agents. The agents belonging to the same component are allowed to communicate over a sub-graph $G_q(E_q,V_q)$. There also exists a sever, coordinating the communication among components. Assume there is a sink agent $SA_q$ of component $q$. The server only communicates with the sink agents, which scales down the number of server-to-agent links from $M$ to $Q$ .

\begin{figure}[h]
	\centering
	\includegraphics[width=0.8\linewidth]{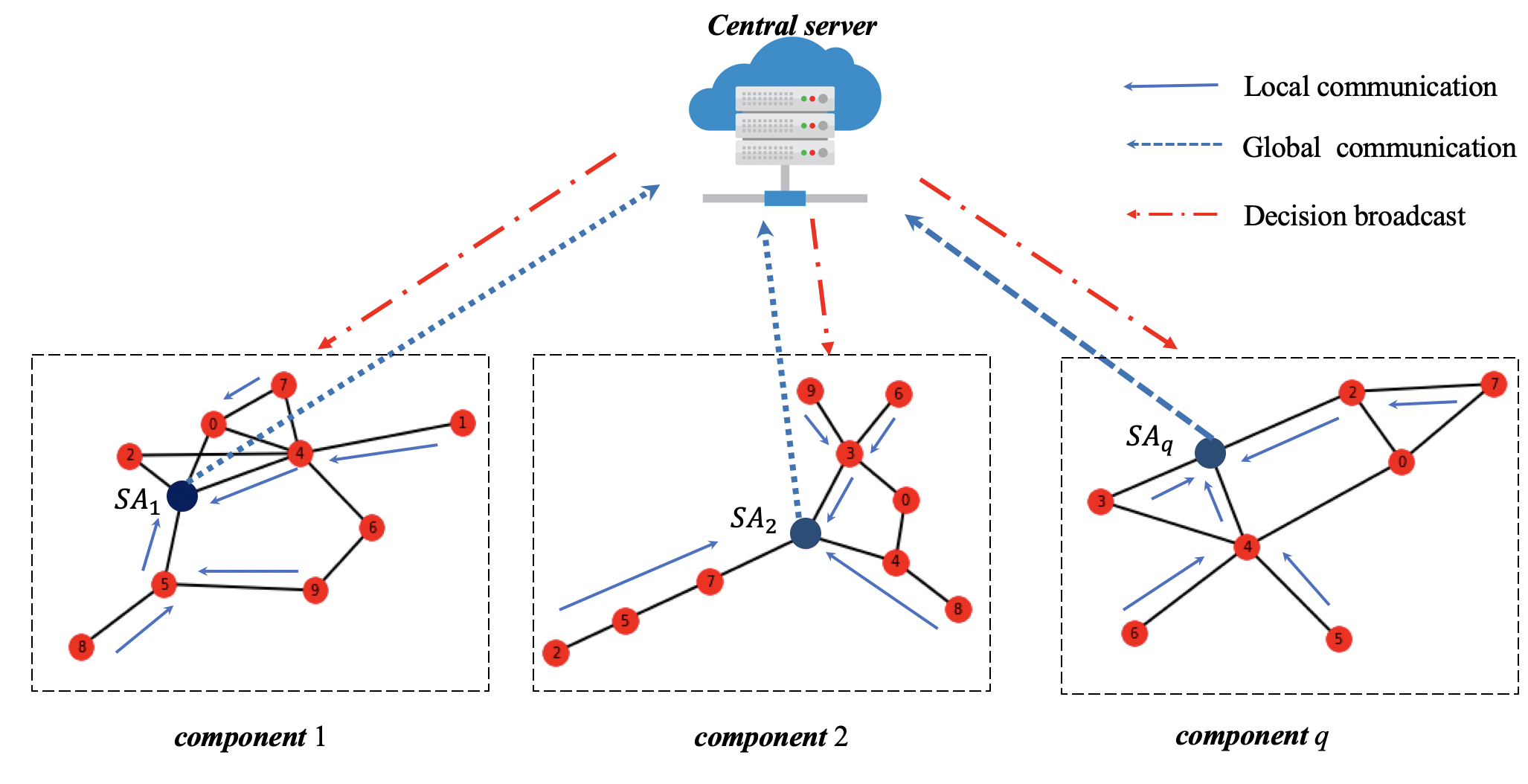}
	\caption{Hybrid communication protocol for federated bandits.}
	\label{hybrid}
\end{figure}

\subsection{Algorithm Description}
HDP-MAB operates in epochs and each of them can be divided into the following sub-phases:
\subsubsection{Local exploration} Each agent $i$ perform up to $S(r)$ times local exploration for each arm $k\in I^{(r-1)}$, update the empirical mean $\hat{x}_{i,k}(r)$ and transfer it to the private version. Noting, we chose the $S(r)$ as we set in the CDP-MAB and DDP-MAB. The communication round starts when agents finish their local exploration and ends when all agents receive the update active arm set from the server. We utilize a two-layer communication protocol:
\subsubsection{Local communication and aggregation} \textcolor{black}{Different from the fully-decentralized setting, we no longer use the GIS protocol to achieve ``global information synchronization''. Instead, we use the Sink Agent Collection (SAC) protocol to realize ``one-way message passing'', where all agents in $q$ send their private means to $SA_q$. The local communication in $q$ ends when $SA_q$ observes information from other $M_q-1$ agents. This inevitably introduces some local communication delay, defined as $t_{\text{delay}}^q$. However, we can wisely select $SA_q$ to minimize this delay by Alg.~4. Given the communication graph $G_q$ of component $q$, we randomly assign any agent $i$ as the sink agent, and calculate the shortest path from all other agents to $i$. The maximum value among them is the local delay introduced by $SA_q = i$, denoted as delay$_i$. Then, $SA_q$ should be the agent with minimal local delay: $SA_q = \arg \min_{i\in q}\{\text{delay}_i\}$}.  %In practical, $G_q$ can be time-varying when nodes disconnect or join. Thus, the server can periodically collect the connections of components to update the sink agent list, which help reduce the communication delay inside the component.
After local communication, the sink agents perform local aggregations.
\begin{algorithm}[H]
	\caption{Find Sink Agents} 
	
	{\bf Input:} 
  set of communication graph $\{G_q(V_q,E_q)\}_{q=1}^Q$\\
   {\bf Output:}  
	Sink Agents $\{SA_1,...,SA_Q\}, t_{\text{delay}}$
	\begin{algorithmic}[1]
	\For{component $q = 1$ to $Q$}
	\For {each agent $i\in[M_q]$ }
    \State compute the shortest distances $sd(i,j)$ between agent $j\neq i$ and $i$;
    \State delay$_i = \max\{sd(i,j)\}-1$;
    \EndFor
    \State $SA_q = \arg \min_{i\in q}\{\text{delay}_i\}$, $t_{\text{delay}}^q = \max_{i\in [M_q]}\{sd(i,SA_q)\} $
    \EndFor
    \State $t_{\text{delay}} = \max_{q\in Q}\{t_{\text{delay}}^q \}$
    
	\end{algorithmic}
\end{algorithm}
\begin{remark}
 In the fully-decentralized setting, each node cannot access the structure of the entire graph, but can only gradually exchange messages with its neighbors. While in the hybrid structure, the server has stronger information collection and computing capabilities, so that it can find all sink agents $\{SA_1,...,SA_Q\}$ by running Alg.~4. 
\end{remark}

\subsubsection{Global communication and aggregation} The global communication will be exactly the same as the protocol in Alg.~1 by treating the sink agents as participants. After the server collects the information, it performs global aggregation and elimination, and finally broadcasts the updated results to all agents.

\subsubsection{Local exploit} \textcolor{black}{Since global communication does not introduce extra delay, the delay of the whole communication round is determined by the slowest component that completes the sink agent collection, which is $t_{\text{delay}} = \max_{q\in Q}\{t_{\text{delay}}^q\} $. All agents exploit the locally observed best empirical arm during $t_{\text{delay}}$}.

 \begin{algorithm}[H]
	\caption{\textcolor{black}{Hybrid Differentially Private Multi-armed Bandit Algorithm (HDP-MAB)} }
	
	{\bf Input:}
	Time horizon: $T$;  Privacy parameter $\epsilon$; number of agents $M$; a set of communication graph $\{G_q(V_q,E_q)\}_{q=1}^Q$\\
	{\bf Initialization:} 
	 $t = 1$, $r=1$; $I^{(0)} = [K]$; $x_{i,k}(1) = \hat{y}_{i,k}(1) = \bar{y}_{i,k}(1) = 0$; $S(0) = 0$;  \\
	$\{SA_1,...,SA_Q\} = \text{FindSinkAgents}(G_1(V_1,E_1),...,G_Q(V_Q,E_Q))$
	%\hspace*{0.02in} {\bf Output:} 
	\begin{algorithmic}[1]
	\While{$t<T$}
		\While {$|I^{(r-1)}|>1$}
		\State Set $\widetilde{\Delta} _r\leftarrow  2^{-r}$
		\State Set $S(r) \leftarrow
 		 \max\{\frac{8\log(8|I^{(r-1)}|r^2T)}{M\widetilde{\Delta}_r^2 }, \frac{8r \sqrt{2\log(8Kr^2T)} }{M^{1.5}\epsilon\widetilde{\Delta}_r}\}$
 		 \State Set $C(r) \leftarrow  \sqrt{\frac{\log (8|I^{(r-1)}|r^2T)}{2MS(r)}} +\frac{r\sqrt{8\log(8Kr^2T)}}{M^{1.5}\epsilon S(r)}$
		\For {component $q = 1$ to $Q$ }
		%\State Set $\hat{x}_{i,k}(r) \leftarrow 0$, $ k\in[K] $
		\State \textit{Local exploration}: Agent $i \in [M_q] $ runs Line 7 - 10 in Alg.1;%each agent $i \in [M_q] $ chooses arm  $k \in I^{(r-1)}$ for 
		%\Statex ~~~~~~~~~~~~$\{S(r) - S(r-1)\}$ times, updates %$\hat{x}_{i,k}(r)$ and calculates the privacy mean 
		%\Statex ~~~~~~~~~~~~$\hat{y}_{i,k}(r) = \hat{x}_{i,k}(r) +  Lap(\frac{1}{M\epsilon [S(r) - S(r-1)]})$; 
		%\State $t_r = t + |I^{(r-1)}|(S(r) - S(r-1))$
		%\For{$t\in[t_r,t_r+t_{\text{delay}}]$};
		%\State \textbf{Communication}: Run \textit{two-layer communication protocol}: 
	    \State \textit{Local communication}: Agent $i \in [M_q] $ sends $\bar{y}_{i,k}(r)$ to $SA_q$; 
	     \State \textit{Local aggregation}: $SA_q$ aggregates $\bar{y}_{q,k}(r)=\frac{\sum_{j\in q} \bar{y}_{j,k}(r)}{M_q}$; 
		\State \textit{Global communication}: $SA_q$ upload $\bar{y}_{q,k}(r)$ to the server;%after local aggregation.
         %\State Agent $i \in [M_q] $ receives the update active arm set from the server. 
         \State \textit{Local exploitation}: Agent $i \in [M_q]$ keeps pulling local empirically best arm until receive the update $I^{(r)}$ from server. 
         %\EndFor
		\EndFor
	\For{the server}
	\State Receive messages from all sink agents.
	\State \textit{Global aggregation}:
	$\bar{y}_{k}(r)=\frac{\sum_{q\in Q}\bar{y}_{q,k}(r)}{Q}$
    \State \textit{Global elimination}: remove $k$ from $I^{(r-1)}$ if $\bar{y}_{max}(r) -\bar{y}_{k}(r)\geq 2C(r)$
    \State Broadcast the update $I^{(r)}$ to all agents.
    \EndFor
    \State $t =  |I^{(r-1)}|(S(r) - S(r-1))+ t_{\text{delay}}$, $r = r+1$, 
	\EndWhile
	\State All agents pull the arm until time $T$.	
	\EndWhile
\end{algorithmic}
\end{algorithm}

\begin{theorem}[Performance of HDP-MAB]
		Given time horizon $T$, privacy parameter $\epsilon$ and the communication cost weight $c_1$, $c_2$. Consider $Q$ components are connected to a server. Inside component $q$, $M_q$ agents with $K$ arms communicate over a graph $G_q$. Algorithm 5\\
$\bullet$ is $(M\epsilon)-$ differentially private;\\
 \textcolor{black}{	$\bullet$ incurs communication cost $C_{H}(T) = O( (c_2M\max_{q\in Q}\{d_G^q\} +c_1Q) \log T)$};\\%O( c_2\sum_{q \in Q}\sum_{i\in [M_q]} sd(i,SA_q) + c_1Q ) \log T)$}  
  \textcolor{black}{	$\bullet$ incurs regret $R_{H}(T)$ upper bounded by} 
\begin{equation}\label{regret5}
     \textcolor{black}{ O(\max\{\sum_{k=1}^{K} \frac{\log(KT\log T)}{\Delta_k}, \frac{K\log T \sqrt{\log(KT\log T)}}{\sqrt{M}\epsilon}\})+ M\max_{q\in Q}\{d_G^q-1\}}
 	\end{equation}
\end{theorem}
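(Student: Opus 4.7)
The plan is to establish the three claims by following the blueprints of Theorems 1 and 3, while carefully accounting for the two-layer communication. The privacy claim is immediate: as in Theorem 1, the sensitivity of the local empirical mean $\hat{x}_{i,k}(r)$ computed over $S(r)-S(r-1)$ fresh pulls of a bounded reward is $1/[S(r)-S(r-1)]$, so injecting Laplace noise with scale $1/(M\epsilon[S(r)-S(r-1)])$ makes each transmitted $\hat{y}_{i,k}(r)$ $(M\epsilon)$-differentially private. Both the sink-level aggregation inside a component and the server-level aggregation across components are post-processing operations and do not weaken the guarantee.

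For the communication cost I would count links per round and multiply by the number of rounds. Inside component $q$ the SAC protocol delivers each of the $M_q-1$ non-sink agents' private messages to $SA_q$ along a shortest path of length at most $t_{\text{delay}}^q \le d_G^q$; thus at most $O(M_q d_G^q)$ agent-to-agent links are built per round, giving total local cost $O\!\bigl(c_2 \sum_q M_q d_G^q\bigr) = O(c_2 M \max_q d_G^q)$ per round. The global layer builds exactly $Q$ sink-to-server links at cost $c_1 Q$ per round. Since $\widetilde{\Delta}_r = 2^{-r}$, the analysis of Theorem 1 shows all suboptimal arms are eliminated within $O(\log(1/\Delta)) = O(\log T)$ rounds, so summing yields $C_H(T) = O((c_2 M \max_q d_G^q + c_1 Q)\log T)$.

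For the regret I would split it into a local-exploration part and a local-exploitation part. First, the choice of $S(r)$ and $C(r)$ is identical to CDP-MAB, and the server sees the aggregated statistic $\bar{y}_k(r) = \frac{1}{Q}\sum_q \bar{y}_{q,k}(r)$, which (up to per-component weighting) is a mean of all $M$ privatized local means; combining Hoeffding for the centered empirical means with the tail of a sum of independent Laplace variables, as in the Theorem~1 proof outline, yields $\Pr\{|\bar{y}_k(r)-\mu_k|\ge C(r)\} \le O(1/(KTr^2))$. On the complement of the union of these bad events, the optimal arm is never eliminated and every suboptimal arm $k$ is discarded by round $r_k = \lceil \log(1/\Delta_k)+1\rceil$, so the cumulative local-exploration regret reproduces the first term $O\!\bigl(\max\{\sum_k \log(KT\log T)/\Delta_k,\ K\log T \sqrt{\log(KT\log T)}/(\sqrt{M}\epsilon)\}\bigr)$.

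Second, the exploitation phase of each round lasts at most $t_{\text{delay}} \le \max_q d_G^q$ time slots and each agent plays its local empirical best, exactly as in DDP-MAB. Under the same favorable events, at round $r$ all surviving arms have gap at most $4\widetilde{\Delta}_{r-1} = 4\cdot 2^{-(r-1)}$, contributing at most $M\max_q\{d_G^q-1\}\cdot 2^{-(r-1)}$ to the regret; summing the resulting geometric series over $O(\log T)$ rounds gives the additive term $O(M\max_q\{d_G^q-1\})$, matching the statement. The main obstacle is the first part of paragraph three: the server's aggregate $\frac{1}{Q}\sum_q \frac{1}{M_q}\sum_{j\in q}\bar{y}_{j,k}(r)$ is a non-uniformly weighted average of individual private means when components have unequal sizes, so the variance contributed by the Laplace noise is not exactly $1/(M^{1.5}\epsilon S(r))$ as in CDP-MAB; I expect to absorb the discrepancy by the inequality $\sum_q M_q^{-1} \le Q^2/M$ (or, more cleanly, by assuming balanced components), keeping the concentration argument asymptotically unchanged.
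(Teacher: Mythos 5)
Your proposal follows essentially the same route as the paper's proof: the privacy claim is inherited from the Laplace mechanism of Theorem 1 with aggregation as post-processing; the communication cost is obtained by counting at most $\sum_q\sum_{i\in[M_q]}sd(i,SA_q)\le M\max_q d_G^q$ agent-to-agent links plus $Q$ sink-to-server links per round and multiplying by the $\lceil\log(1/\Delta)+1\rceil=O(\log T)$ rounds; and the regret is split into the CDP-MAB exploration term plus an exploitation term in which each of the at most $\max_q\{d_G^q-1\}$ delay slots of round $r$ costs at most $M\cdot 2^{-(r-1)}$ under the good event $E_R$, summing to a geometric series of order $O(M\max_q\{d_G^q-1\})$. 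This matches the paper's two-part decomposition $R_{H(1)}+R_{H(2)}$ step for step.

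The one point to correct is your proposed patch for unequal component sizes. The inequality $\sum_q M_q^{-1}\le Q^2/M$ is false: by the AM--HM (equivalently Cauchy--Schwarz) inequality, $\sum_q M_q^{-1}\ge Q^2/\sum_q M_q = Q^2/M$, with equality only when all components are balanced. Consequently the non-uniform weights $\frac{1}{QM_q}$ in the server's statistic $\frac{1}{Q}\sum_q\frac{1}{M_q}\sum_{j\in q}\bar{y}_{j,k}(r)$ can only \emph{increase} the effective Laplace-noise variance and \emph{reduce} the effective sample count relative to the uniform average $\frac{1}{M}\sum_{i=1}^M\bar{y}_{i,k}(r)$, so that patch does not close the gap. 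The clean resolutions are the one you mention second (assume balanced components, so the two averages coincide) or to have the server weight each sink's report by $M_q/M$ rather than $1/Q$. The paper itself silently adopts this identification by asserting $R_{H(1)}(T)=R_C(T)$ without addressing the weighting, so your observation is a legitimate criticism of the paper; only your proposed inequality-based fix is unusable.
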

where $d_G^q$ is the diameter of component $q$. The detailed proof can be found in Appendix.F.
%where $sd(i,j)$ is the shortest distance between agent $i$ and $j$ and $$t_{\text{delay}} =  \max_{q\in Q}\{\max_{i\in [M_q]}\{sd(i,SA_q)\}\}$$

To constrain the total communication rounds as $R$ to realize communication efficiency, we can just set $\widetilde{\Delta} _r\leftarrow \Delta^{r/R}$ in Line 3 of Alg.5, which leads to the following results: 
\begin{corollary}
 Alg.5 with communication round constrain $ R $ achieves the regret of 
 \begin{equation}
   \textcolor{black}{R^R_H(T) = O(\min\{M,{T^{2/R}}\}\cdot\max \left\lbrace  \sum_{k=1}^{K}\frac{\log (R^2KT)}{\Delta_k} ,\frac{ R\sqrt{\log (R^2KT)}}{\sqrt{M}\epsilon}\right\rbrace  + M\max_{q\in Q}\{d_G^q\})}  
 \end{equation} with communication cost $C^R_H(T) = O(c_2M\max_{q\in Q}\{d_G^q-1\} +c_1Q) R$.
\end{corollary}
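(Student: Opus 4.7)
The plan is to combine the proof strategy of Theorem 2 (communication-constrained CDP-MAB) with that of Theorem 4 (HDP-MAB), by replacing the doubling schedule $\widetilde{\Delta}_r = 2^{-r}$ with the round-budget-adapted schedule $\widetilde{\Delta}_r = \Delta^{r/R}$, and then re-examining the privacy argument, the high-probability concentration of the protected aggregated means, the cumulative exploration regret across $R$ epochs, the exploitation delay regret introduced by the SAC protocol, and the total communication cost. Privacy is immediate: the per-agent Laplace parameter $1/(M\epsilon[S(r)-S(r-1)])$ is unchanged from Alg.~5, the sensitivity of the per-epoch empirical mean is still $1/[S(r)-S(r-1)]$, so each $\hat{y}_{i,k}(r)$ is $\epsilon$-DP and composition across $M$ agents yields $M\epsilon$-DP exactly as in Theorem 4.

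For correctness, I would define the good event $E_r = \{|\bar y_k(r)-\mu_k| < C(r), \ \forall k\in I^{(r-1)}\}$, and combine Hoeffding on the empirical-mean component with the standard tail bound for the sum of $M$ Laplace random variables on the noise component (as in the proof outlines of Theorems~1-4). A union bound then gives $\Pr[\overline{E_r}] \le 1/(r^2T)$. On $\bigcap_r E_r$, the elimination rule removes arm $k$ whenever $\Delta_k > 2\widetilde{\Delta}_r = 2\Delta^{r/R}$, so the optimal arm survives and each suboptimal $k$ is eliminated by round $r_k = \lceil R(1+\log(1/\Delta_k))/\log(1/\Delta)\rceil \le R$. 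Substituting the new $\widetilde{\Delta}_r$ into the $S(r)$ formula and summing $\sum_k M\Delta_k S(r_k)$ with the same geometric argument as Theorem~2 produces the leading term $O\bigl(\max\{\sum_k\log(R^2 KT)/\Delta_k,\ R\sqrt{\log(R^2 KT)}/(\sqrt{M}\epsilon)\}\bigr)$; the extra $T^{2/R}$ blow-up appears because $\widetilde{\Delta}_{r_k-1}^{-1}$ can be as large as $\Delta_k^{-1}T^{2/R}$ under the geometric-in-$R$ schedule, while the outer $\min\{M,T^{2/R}\}$ comes from the fallback of running Alg.~1 independently at each agent, which costs $M\cdot R_C(T)$.

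For the delay term I would adapt the argument from Theorem~3. During each of the $R$ rounds, every agent exploits its local empirical best arm for at most $t_{\text{delay}} = \max_q\{t_{\text{delay}}^q\} \le \max_q\{d_G^q - 1\}$ slots (established by Alg.~4's minimization of the local-collection radius). On the good event, at the start of round $r$ the surviving suboptimal mass is at most $\Delta^{(r-1)/R}$, so the per-round exploitation regret is $M(\max_q\{d_G^q\}-1)\Delta^{(r-1)/R}$, plus one trailing $M\max_q\{d_G^q\}$ term when the best arm is finally certified. Summing this geometric series telescopes to $O(M\max_q\{d_G^q\})$, independent of $R$. For the communication cost, I count per round: the SAC phase in component $q$ activates at most $(M_q-1)d_G^q$ agent-to-agent links (each non-sink agent sends along a path of length at most $d_G^q$ to $SA_q$), and the global phase activates $Q$ server-to-sink links; multiplying by $R$ yields $O((c_2M\max_q\{d_G^q-1\}+c_1Q)R)$.

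The main obstacle I anticipate is Step~3: carefully tracking how the $\Delta^{r/R}$ schedule interacts with the two-term $\max\{\cdot,\cdot\}$ structure of $S(r)$ so that the bound factors cleanly as $\min\{M,T^{2/R}\}$ times the usual $\max$-expression. In particular, one must verify that the worst-case $r_k$ (driven by the smallest gap $\Delta$) forces exactly $T^{2/R}$ rather than a larger power, and that the Laplace-noise concentration is still tight when $R$ is small enough that epoch lengths grow rapidly. The delay analysis in Step~4 is subtler than in Theorem~3 because the geometric decay is now in $\Delta^{r/R}$ rather than $2^{-r}$; one has to check that the partial sum $\sum_{r=1}^{R}\Delta^{(r-1)/R}$ remains $O(1/(1-\Delta^{1/R}))\cdot (1-\Delta) = O(R)$ at worst, and that when this dominates the leading term it is already absorbed into the $\min\{M,T^{2/R}\}$ factor, preserving the clean statement of the corollary.
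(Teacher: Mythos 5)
Your proposal is correct and follows essentially the same route as the paper, which proves this corollary simply by combining the proof of Theorem 4 (component-wise delay bound via $\max_{q}\{d_G^q\}$ and per-round link counting) with Corollary 1 (the $\widetilde{\Delta}_r=\Delta^{r/R}$ rescheduling carried over from Theorem 2). The one subtlety you flag --- that $\sum_{r=1}^{R}\Delta^{(r-1)/R}$ is only $O(R)$ rather than $O(1)$ when $R\gg\log(1/\Delta)$ --- is present (and glossed over) in the paper's own proof of Corollary 1 as well, so your treatment is no weaker than the paper's.
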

This result can be obtained by combining the proof of Theorem 4 and Corollary 1.
\begin{remark}
 When $R$ is larger than $O(\log T)$, we can recover $R_{H}(T)$. To ensure our performance not worse than the non-communication case, $R \geq 2\log T /\log M$.
 \end{remark}
%\begin{remark}
%$R$ only affects the total number of communication rounds and thus the number of explorations in each round. It does not change the $t_delay$ of each round caused by information diffusion. When $R$ is larger than $\left\lceil\log (\frac{1}{\Delta})+1\right\rceil$, $R_{D}(T)$ can be achieved.
%\end{remark}

\subsection{Performance Analysis}

\subsubsection{Communication cost}
\textcolor{black}{Employing the hybrid structure is helpful to achieve communication efficiency. On the one hand, the GIS protocol only allows information exchange between neighbors to completely diffuse the information. The communication cost thereby is determined by $d_G$ and the number of graph edges. The SAC protocol realizes one-way information aggregation to a fixed sink node with the help of the server, which reduces the number of agent-to-agent links inside a component. On the other hand, the number of server-to-agents links is only proportional to the number of components $Q$, not $M$, which significantly reduces the burden of the uplink}.

%Thus the communication cost depends on the the shortest path from each agent to the sink agents, i.e., $\sum_{q \in Q}\sum_{i\in [M_q]} sd(i,SA_q)$. 

\subsubsection{Regret}
\textcolor{black}{Compared with $R_{D}(T)$, the additional term of $R_C(T)$ no longer depends on $d_G$ but $\max_{q\in Q}\{d_G^q\}$. If $M$ is distributed in different components, the size of each sub-graph and corresponding local delay decrease, which ultimately leads to the reduction of the regret introduced by the local exploitation. In one extreme case, when $Q=M$, each agent is directly connected to the server, then the delay of each communication round is 0 and we recover $R_{C}(T)$. In addition, we can recover $R_{D}(T)$ with $Q=1$}. 

\subsubsection{Trade-off} \textcolor{black}{The above results provide important insights into designing practical communication efficient federated MAB systems. i) Instead of utilizing the master-worker structure, we can reduce the number of server-to-agent links by disturbing the agents into different components. ii) Second, the unbalanced agent distribution may cause a large delay as it is determined by the slowest component that completes the local communication. Thus, we should try to ensure the balance of agents in each component. iii) Finally, enforcing the sub-graph to be close to the center (or multi-center) rather than fully connected or ring structure can also help reduce communication costs as well as regrets.}

\subsection{Discussion}

\subsubsection{Comparison of three communication structures}
     \textcolor{black}{We summary the procedure of one specific epoch for three algorithms in Fig. \ref{compare} and compare their performance in terms of communication cost, delay and regret.}

    \begin{figure}[H]
	\centering
	\includegraphics[width=1\linewidth]{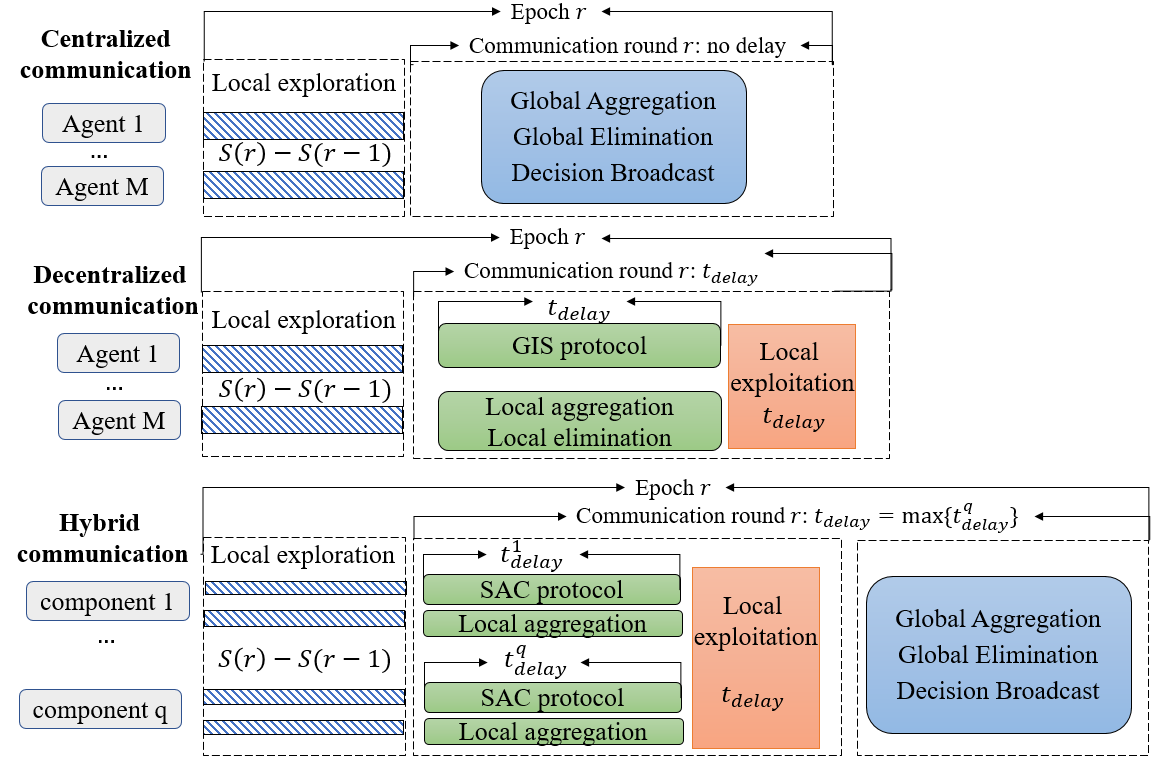}
	\caption{Comparison of three communication structures.}
	\label{compare}
\end{figure}

\begin{itemize}
\item \textbf{Communication cost}.  \textcolor{black}{$C_C(T)$ is determined by the number of  server-to-agent links and the number of communication rounds required to remove all inferior arms; $C_D(T)$ and $C_H(T)$ are decided by he number of built agent-to-agent links, the required communication rounds as well as the $t_{\text{delay}}$ of each round. }
\item \textbf{Delay}.  \textcolor{black}{The centralized setting does not introduce delay. The $t_{\text{delay}}$ in decentralized communication is introduced by the GIS protocol, and in hybrid communication it is the maximum value of delay introduced by local communication among all components.}
\item \textbf{Regret}. \textcolor{black}{$R_{C}(T)$ achieves $O(\log T)$ regret. Both $R_D(T)$ and $R_H(T)$ have the addable items based on $R_{C}(T)$, which are caused by the local exploit during $t_{\text{delay}}$. The items are only related to the graph structure and do not scale with $T$. Thus, both three algorithms can achieve the same $O(\log T)$ order regret without considering privacy.} 
\end{itemize}

\subsubsection {Heterogeneous reward structure}
\textcolor{black}{We now discuss the extension of the above algorithms to the heterogeneous rewards setting. Similar to \cite{Shi_Shen_2021,zhu2021federated}: We consider %the rewards are independent and identically distributed over time for an individual player. That is, 
that for arm $k$ and players $i,j$, the expected mean $\mu_{i,k}$ and $\mu_{j,k}$ are not equal in general. There exists a  \textit{true} reward or \textit{global} reward of arm $k\in [K]$ that equals to the average of the means of all agents’ expected rewards:
$\mu_k = \frac{1}{M}\sum_{i=1}^M \mu_{i,k}$, that implies the true reward can be obtained by averaging and thus cancelling out local biases. No individual agents can make a correct inference by simply collecting individual rewards. Therefore, they must collaborate with each other to estimate the true rewards in a federated fashion.}     
\textcolor{black}{ We claim that our methods in this work can be applied to the above heterogeneous setting without modification.}

\begin{corollary}
 Under the heterogeneous reward setting, 
 \begin{itemize}
\item      \textcolor{black}{Our CDP-MAB can achieve $R_C(T)$ regret} ;
\item       \textcolor{black}{Our DDP-MAB can achieve $R_D(T) + O(M(d_G-1)\log (T))$ regret};
\item       \textcolor{black}{Our HDP-MAB can achieve $R_H(T) + O(M\max_{q\in Q}\{d_G^q-1\} \log T)$ regret.}
     
 \end{itemize}
\end{corollary}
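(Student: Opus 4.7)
The plan is to verify that the homogeneous analysis of Theorems 1, 3, and 5 is essentially preserved under heterogeneous rewards, with the sole exception being the local exploitation phase whose per-round regret no longer benefits from the geometric damping by $\widetilde\Delta_{r-1}$. First I would observe that each per-agent local mean $\hat x_{i,k}(r)$ satisfies $\mathbb{E}[\hat x_{i,k}(r)] = \mu_{i,k}$, so the aggregate $\bar y_k(r) = \frac{1}{M}\sum_i \bar y_{i,k}(r)$ is an unbiased estimator of the global mean $\mu_k = \frac{1}{M}\sum_i \mu_{i,k}$. The per-agent $\ell_1$ sensitivity of $\hat x_{i,k}(r)$ is unchanged (rewards still lie in $[0,1]$), so the Laplace calibration in Line 9 of Alg.~1 and the choices of $S(r)$, $C(r)$ continue to yield the same high-probability concentration event $E_r = \{|\bar y_k(r) - \mu_k| < C(r)\}$. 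Consequently the good-event argument in the proofs of Theorems 1--3 still shows that, with high probability, arm 1 is never eliminated from the active set and every suboptimal arm $k$ is removed by epoch $r_k = \lceil \log(1/\Delta_k)+1 \rceil$; this portion of the analysis transfers verbatim to all three algorithms.

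For CDP-MAB, the only phase producing regret is the local exploration of active arms. When agent $i$ pulls arm $k \in I^{(r-1)}$, its per-pull regret contribution is $\mu_{i,1}-\mu_{i,k}$; summing over $i$ yields $\sum_i(\mu_{i,1}-\mu_{i,k})=M(\mu_1-\mu_k)=M\Delta_k$, exactly matching the homogeneous bookkeeping. Hence the total regret telescopes as in Theorem~1 and the bound $R_C(T)$ holds without modification.

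For DDP-MAB and HDP-MAB the local-exploration regret is identical and contributes $R_C(T)$. The new regret comes from the local exploitation pulls executed during the $t_{\text{delay}}\le d_G-1$ (respectively $\max_q\{d_G^q-1\}$) synchronization slots of each communication round. In the homogeneous analysis any arm in the current active set has global gap at most $2\widetilde\Delta_{r-1}$, which dampens the per-round exploitation regret to $O(M(d_G-1)\widetilde\Delta_{r-1})$ and telescopes over $r$ to the $O(M(d_G-1))$ term appearing in $R_D(T)$. Under heterogeneity this dampening is lost: agent $i$'s locally empirically best arm tracks $\arg\max_{k\in I^{(r-1)}}\mu_{i,k}$, whose global contribution $\mu_{i,1}-\mu_{i,k}$ can be $\Theta(1)$ even after the active set has shrunk, because the local ordering at agent $i$ need not agree with the ordering by $\mu_k$. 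The best uniform bound available is therefore that each exploit pull costs at most $\max_{i,k}|\mu_{i,1}-\mu_{i,k}|\le 1$, giving a per-round exploitation regret of $M(d_G-1)$ for DDP-MAB and $M\max_q\{d_G^q-1\}$ for HDP-MAB. Because the good events of Theorems 3 and 5 still imply that $O(\log T)$ communication rounds suffice to eliminate every suboptimal arm, summing this per-round bound over all such rounds contributes the additional $O(M(d_G-1)\log T)$ and $O(M\max_q\{d_G^q-1\}\log T)$ terms claimed in the corollary.

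The step I expect to be the main obstacle is making the loss of geometric damping fully rigorous: one must argue that no local tie-breaking or aggregation rule available to the agent before synchronization can improve on the worst-case $O(1)$ per-pull bound, since an agent acting only on $\hat x_{i,\cdot}$ has access only to estimators of $\mu_{i,\cdot}$ and cannot distinguish the globally best arm from other active arms. A clean way to handle this is to charge every exploit pull at the deterministic worst-case value $1$ and exploit the fact that the number of such pulls per round is deterministically bounded by $M(d_G-1)$ (respectively $M\max_q\{d_G^q-1\}$) regardless of reward realizations, so no additional probabilistic machinery beyond the elimination events already established in Theorems 3 and 5 is required.
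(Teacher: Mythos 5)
Your proposal is correct and follows essentially the same route as the paper's proof: aggregation over all $M$ agents cancels the local bias so the concentration/elimination argument of Theorems 1, 3 and 4 carries over unchanged, and the only new regret comes from the local exploitation phase, which must now be charged at the worst-case $O(1)$ per pull (rather than the geometrically damped $2^{-(r-1)}$ of the homogeneous analysis) over $O(\log T)$ rounds of at most $M(d_G-1)$ (resp.\ $M\max_{q\in Q}\{d_G^q-1\}$) pulls each. Your write-up is in fact more explicit than the paper's about why the $\log T$ factor appears only in the heterogeneous case — namely the loss of the telescoping gap bound because an agent's locally best arm need not be globally near-optimal — which the paper asserts without elaboration.
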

%\textcolor{black}{where where $d_G$ is the diameter of graph $G$ in the decentralized network; $d_G^q$ is the diameter of sub-graph in component $q$.} 
\begin{proof}
\textcolor{black}{The core idea of our work and [2][21] is the same: Through communication, the estimated mean at the server (or each agent) can converge on the average value of empirical means from all $M$ agents, which can cancel local bias. In the centralized and hybrid setting, the server can capture the global knowledge without local bias. In other words, the server can observe unbiased estimation of true means, which directly avoids the impact of heterogeneity. In the decentralized setting, our proposed GIS protocol ensures that each agent can receive empirical means from other $M-1$ agents in each epoch. Then, the local estimated mean (at each agent) can converge to the averaged empirical means in finite time slots. That is, each agent can reach consensus, like the server in the centralized setting. Note that both our decentralized and hybrid algorithm have a \textit{local exploitation} phase. During this period, we need to pull the local empirical optimal arm, which may be inconsistent with the global optimal one. Since we need $O(\log T)$ communication rounds, each of them lasts for at most $d_G$ (in the decentralized setting) or $\max_{q=1}^Q\{d_G^q\}$ (in the hybrid setting) time slots. So local exploitation can introduce at most $O(\log T)$ order regret under the heterogeneous reward setting, which does not affect the final regret order.}
\end{proof}

\iffalse
\textcolor{black}{To realize this, \cite{Shi_Shen_2021} utilize a similar master-worker communication structure with our centralized setting. Then the server can capture the global knowledge without local bias. In other words, the server can observe unbiased estimation of true means, which directly avoids the impact of heterogeneity.} \textcolor{black}{\cite{zhu2021federated} can be seen as the natural extension of UCB algorithm to decentralized and private settings. The gossip protocol is adopted by allowing the agent to exchange messages with one of the neighbors at each time slot. Different from them, our new proposed GIS protocol ensures that each agent can receive empirical means from other $M-1$ agents in each epoch. Then, for both protocols, the local estimated mean (at each agent) can converge to the averaged empirical means in finite time slots. That is, each agent can cancel the local bias, like the server in the centralized setting. Note that both our decentralized and hybrid algorithm have a \textit{local exploitation} phase. During this period, we need to pull the local empirical optimal arm, which may be inconsistent with the global optimal one. Since we need $O(\log T)$ communication rounds, each of them lasts for at most $d_G$ time slots, so local exploit can introduce at most $O(\log T)$ regret under heterogeneous reward setting which does not affect the final regret order.}
\fi

\section{Experiments}

In this section, we conduct experiments to empirically verify the theoretical results of previous sections, that is, the trade-offs between communication, privacy and learning regret under different communication protocols. %We first illustrate the simulation settings of our FMAB instances. 

\subsection{Centralized Setting}
\textit{Experimental Settings}: We consider $M=50$ agents connected with a central server. Each of them plays a Bernoulli MAB with 100 arms. The means are randomly generated from $[0,1]$. We set $c_1$, the communication cost to build a server-to-agent link, equal to 25. We first consider the homologous reward structure, where all agents see the same set of arm means.  %We show the experimental results of CDP-MAB (Alg. 1), DDP-MAB (Alg. 2) and HDP-MAB (Alg. 3) in the following. All algorithms were are run 50 times. 
\subsubsection{Privacy-regret trade-off}
In this part, we allow all agents to communicate with the server at each communication round and only investigate the effect of privacy level $\epsilon$. We consider 4 different privacy parameters $\epsilon = \{0.1,0.3,0.5,1\}$. Fig. \ref{fig:epsilon} shows that the regret increases with the decreasing of  $\epsilon$ (higher level of privacy), since larger noises are added on the local update estimations. $\epsilon=1$ indicates the non-private case.% in which we achieve about $5\times 10^4$ total regrets and the per-agent regret is around $1\times 10^3$.
\begin{figure}[H]
	\centering
	\includegraphics[width=0.6\linewidth]{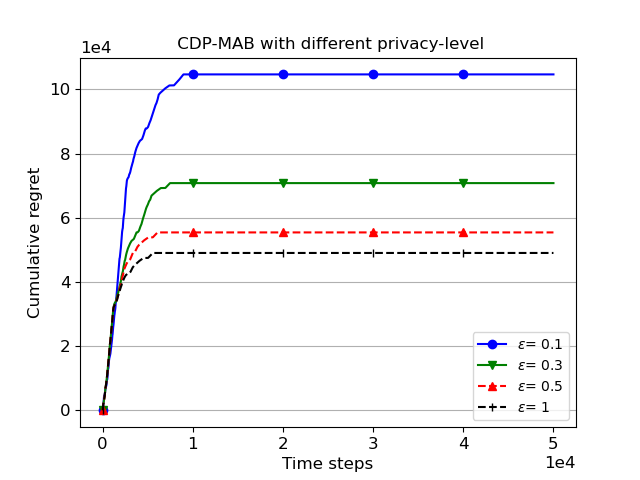}
	\caption{Regrets of CDP-MAB with different privacy level $\epsilon$.}
	\label{fig:epsilon}
\end{figure}

\subsubsection{Communication-regret trade-off}
We then fix the privacy-preserving level and discuss the effect of communication-reduction strategies.  

\textit{Participating rate $p$}: we first consider 5 different participating rates $p = \{0.2,0.4,0.6,0.8,1\}$. Among that, $p=1$ indicates the fully participating case that all agents send their perturbed models to the server after each epoch. Figure. \ref{fig:probfull} shows that the regrets at $T$ decrease as $p$ increases.

%\begin{figure}[h]
	%\centering
	%\includegraphics[width=0.7\linewidth]{../figures/prob_full}
	%\caption{regrets of CDP-MAB with different participating rate $p$.}
%	\label{fig:probfull}
%\end{figure}

\textit{Communication round $R$}: We then fix the participating rate as $p=1$, and show the effect of the communication rounds constraints. We consider the number of communication rounds $R$ varying in $\{2,3,4,5\}$. Fig.\ref {fig:rfull} shows that the regrets decrease with $R$ increase. It is worth noting that the regret we reached at $ R=4$ and 5 are almost the same. This is because we need about $ \log(T) \approx 4$ communication rounds before eliminating all suboptimal arms. When $R>4$, the term $T^{-2/R}$ in Thereon 2 tends to a constant and do not affect the total regrets.  % From Table \ref{tab:round}, we can see that our algorithm can correctly identify the optimal arm after the last communication round, and exploit it until time $T$. (Due to table restrictions, we only show part of the arms in the active arm set.)  
\begin{figure}[H]
	\centering
	\begin{minipage}[t]{0.45\textwidth}
		\centering
		\includegraphics[width=7.5cm]{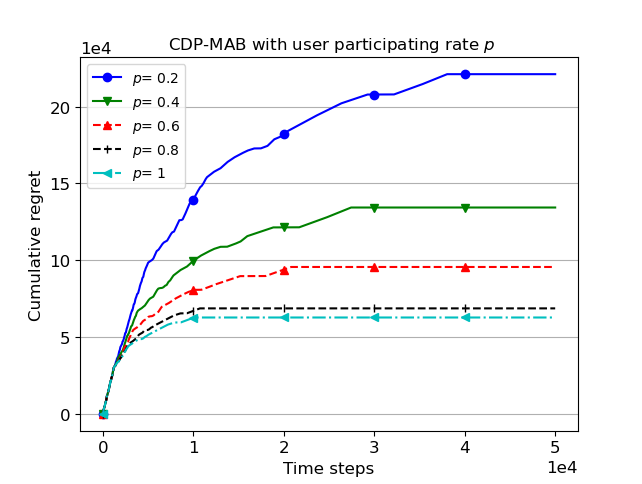}
		\caption{Regrets of CDP-MAB with participating rate $p$.}
		\label{fig:probfull}
	\end{minipage}
	\begin{minipage}[t]{0.45\textwidth}
		\centering
		\includegraphics[width=7.5cm]{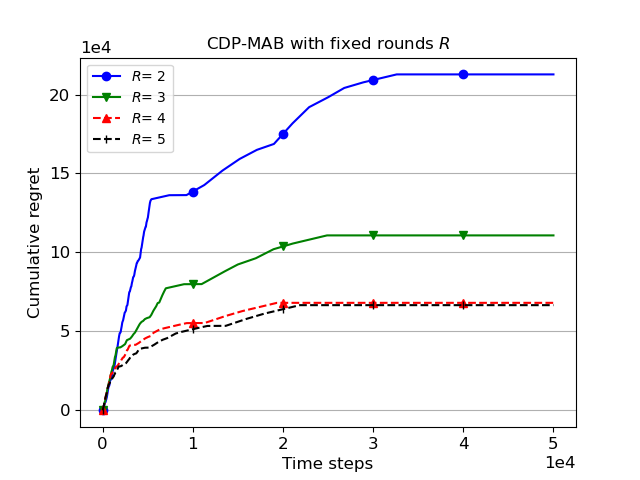}
		\caption{Regrets of CDP-MAB with rounds $R$.  }
			\label{fig:rfull}
	\end{minipage}
\end{figure}

\textit{Combination of $p$ and $R$}: we finally combine these two communication constraints $p$ and $R$ with privacy parameter $\epsilon = 1$. With different pairs of $(p,R)$, we compare the final cumulative regret $R_{C}(T)$ achieved at time slot $T$ as well as the total communication cost $C_{C}(T)$. Fig.~\ref{fig:combpr} shows a clear trade-off between the communication cost and learning performance. Note that $R=5,p=1$ is close to the largest amount of required communication cost of the centralized setting. Correspondingly, the regret of this case is close to the performance in Fig. \ref{fig:epsilon} with $\epsilon = 1$.
%\begin{figure}[h]
	%\centering
	%\includegraphics[width=0.7\linewidth]{../figures/R_full}
	%\caption{regrets of CDP-MAB with limited communication rounds $R$.}
	%\label{fig:rfull}
%\end{figure}

\iffalse
 \begin{figure}[H]
 	\centering
 	\subfigure[]{
 		\begin{minipage}{0.48\textwidth}
 			\centering
 			\includegraphics[width=6cm]{../figures/plarger}
 			\label{fig:plarger}
 			%\caption{Per-agent regrets of HDP-MAB with different $p_c$.}
 		\end{minipage}%
 	}%
 	\subfigure[]{
 		\begin{minipage}{0.48\textwidth}
 			\centering
 			\includegraphics[width=6cm]{../figures/plargerc}
 			\label{fig:plargerc}
 			%\caption{Per-agent regrets of HDP-MAB with different $p_c$.}
 		\end{minipage}
 	}
 	\subfigure[]{
 		\begin{minipage}{0.48\textwidth}
 			\centering
 			\includegraphics[width=6cm]{../figures/Rlarger}
 			\label{fig:Rlarger}
 			%\caption{Per-agent regrets of HDP-MAB with different $p_c$.}
 		\end{minipage}%
 	}%
 	\subfigure[]{
 		\begin{minipage}{0.48\textwidth}
 			\centering
 			\includegraphics[width=6cm]{../figures/Rlargerc}
 			\label{fig:Rlargerc}
 			%caption{Per-agent learning regrets of HDP-MAB with different $p_c$.}
 		\end{minipage}
 	}
 	\caption{The trend of regret and communication cost under the two communication reduction strategies.}
 \end{figure}
 
%\begin{figure}[h]
%	\centering
	%\includegraphics[width=0.6\linewidth]{../figures/comb_pr}
	%\caption{Regrets  of CDP-MAB with communication constraints $p$ and  $R$.}
%	\label{fig:combpr}
%\end{figure}
\fi
\begin{figure}[htbp]
	\centering
	\begin{minipage}[t]{0.48\textwidth}
		\centering		\includegraphics[width=7cm]{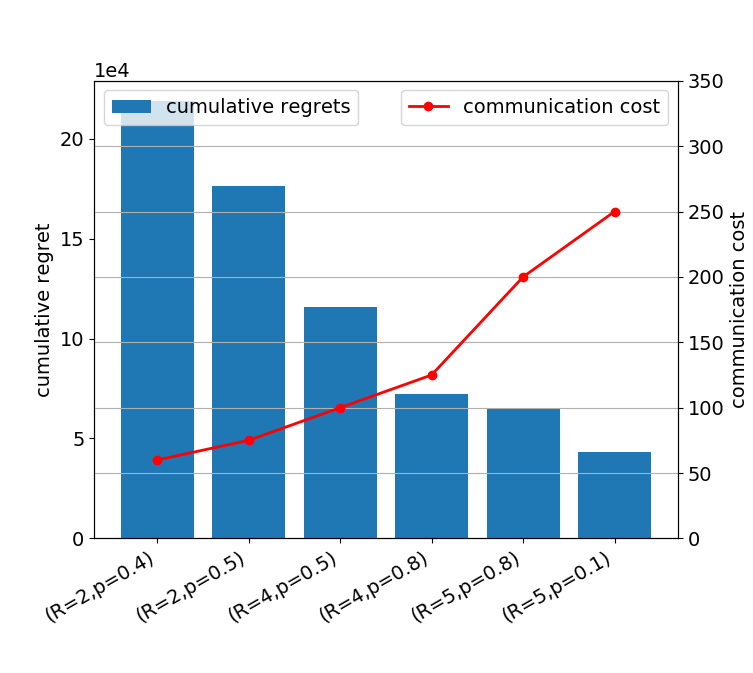}
		\caption{Regrets of CDP-MAB with participate rate $p$ and communication rounds $R$.}
		\label{fig:combpr}
	\end{minipage}
	\begin{minipage}[t]{0.48\textwidth}
		\centering
			\includegraphics[width=8cm]{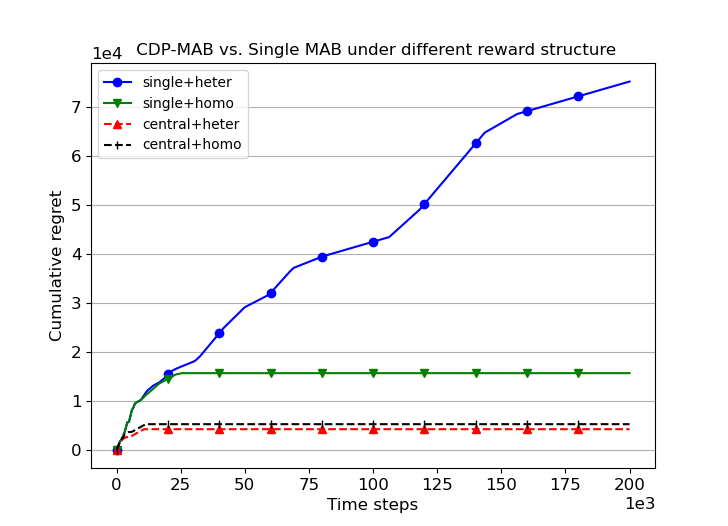}
	\caption{Regrets of Single-MAB vs. CDP-MAB under different reward structures.}
	\label{fig:central_h}
	\end{minipage}
\end{figure}
\subsubsection{Heterogeneous reward setting}
\textcolor{black}{We use a 5-agent 10-armed small instance to illustrate the effectiveness of CDP-MAB on heterogeneous rewards. For the definition of heterogeneous reward, please refer to Section VI-D. From Fig.~\ref{fig:central_h} we see that our proposed method obtain similar regrets under two reward structures, which  reached $1/M$ of the single-agent performance under homogeneous reward. For heterogeneous rewards, the single-agent method cannot converge due to the inconsistency between the local and global best arm. After completing the local best arm identification and entering the exploit phase, the inconsistency will bring a linear regret with $T$}.

\subsection{Decentralized Setting}
\textcolor{black}{In this part we investigate how the properties of graphs can affect the learning performance in the decentralized setting.} 
\subsubsection{Network structure}
\textcolor{black}{We consider a 50-agent 100-armed homogeneous problem instance. We set $c_2$, the communication cost for each agent-to-agent link as 1. The agents are connected based on four kinds of graphs: \{Fully-connected, Star, Ring, Random\}. Some examples are shown in Fig.~\ref{fig:graph_example}}. 
%\begin{itemize}
    %\item \textbf{Fully connected}. Each agent has $49$ neighbors.
    %\item \textbf{Star.} Select one agent as the central node and others as peripheral nodes. The central node has 49 neighbors while each peripheral node only has one neighbor.
    %\item \textbf{Ring}. Each agent has 2 neighbors.
    %\item \textbf{Random}. Randomly generate a graph with $50$ nodes.
%\end{itemize}

\begin{figure}[htbp]
	\begin{minipage}[t]{0.45\textwidth}
		\includegraphics[width=7cm]{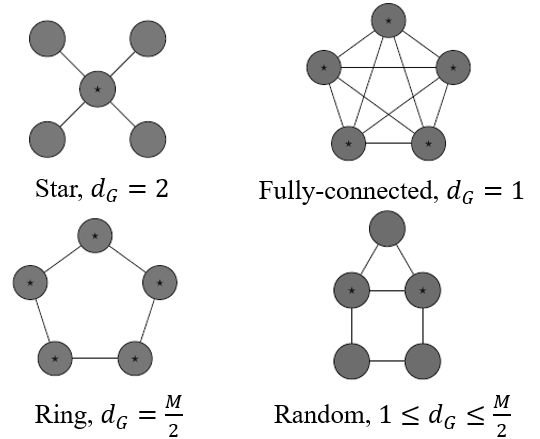}
		\caption{Examples of 5-agent graph structures.}
		\label{fig:graph_example}
	\end{minipage}\hspace{5mm}
	\begin{minipage}[t]{0.45\textwidth}
		\includegraphics[width=7cm]{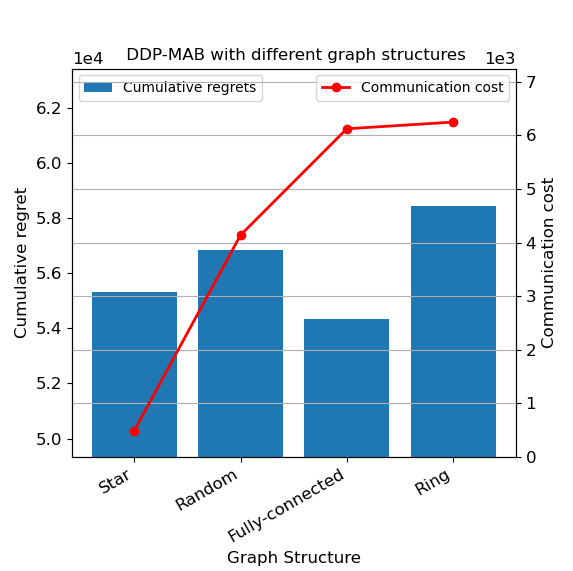}
		\caption{Regrets of DDP-MAB with different graph structures.}
		\label{fig:graph}
	\end{minipage}\hspace{5mm}

\end{figure}
\textcolor{black}{From Fig. \ref{fig:graph} we can see that the Star and Fully-connected graph achieve the smallest regrets while the Ring graph incurs larger regret. This is consistent with our results in Theorem 3. The $R_D(T)$ is equal to the $R_C(T)$ plus an item dominated by the diameter of the graph. The diameters of Fully-connected and Star graph are 1 and 2 respectively, thus their performances are close to the centralized setting. The SPIN protocol runnning in the Ring graph has the largest delay, leading to the largest regret caused by the local exploitation}. 

The communication cost is determined by both the diameter and the total number of edges. Therefore, regret and communication cost are not directly proportional. The Star graph is closest to the centralized setting and has the smallest communication cost, since the information can be completely diffused in the network as long as it is transmitted in two steps. To achieve this goal, we need to create $M(M-1)/2$ edges in the fully-connected graph, which introduces a huge communication cost.

\subsection{Hybrid Setting}
In this section, we divide 100 agents into several components with different properties. We set $c_1 = 50$ and $c_2 = 1$.

\subsubsection{Component structures}
We list the structure of each component $q$ a in Table \ref{tab:hybrid}. The sub-graphs in case1, case2 and case3 are both fully connected, however, the number of agents assigned in each component is different. Case 1, case 4 and case 5 have the same number of agents in each components while communicating over different structures of sub-graphs. In particular, case 3 can be regarded as the fully-decentralized setting where all agents are located at the vertices of one graph. Case 6 can be regarded as a centralized setting since there is only one agent in each component, which is directly connected to the server.
 %We pay more attention to the per-agent regret in different component rather than the total regret in this section.

\begin{table}[h]
	\centering
		\caption{Different settings for hybrid structure}
	\begin{tabular}{|c|c|c|}
		\hline
		& $\{M_q \}$                & structures                   \\ \hline
		case 1 & \{20,20,20,20,20\} & fully-connected    \\ \hline
		case 2 & \{63,24,10,6,7\} & fully-connected   \\ \hline
		case 3 & \{100\} & fully-connected        \\ \hline
		case 4 & \{20,20,20,20,20\} & Random \\ \hline
		case 5 & \{20,20,20,20,20\} & Star\\ \hline
		case 6 & \{1,...,1\}$\times$ 100 & -  \\ \hline
	\end{tabular}

	\label{tab:hybrid}
\end{table}

From Fig. \ref{fig:hgraph} we can see that, although case1, case2 and case3 can finally achieve similar per-agent regrets, while the communication cost gradually increases. This is because maintaining a fully connected graph with more agents requires us to establish more agent-to-agent links. Case 1, case 4 and case 5 have a balanced agents distribution. Comparing these three case, the sub-graphs close to central setting (like star graph) achieve smaller communication cost. It also implies that the hybrid structure significantly reduces the communication cost compared with the centralized setting (case 6) or the fully-decentralized setting(case 3). 
\subsubsection{Heterogeneous Reward}
\textcolor{black}{Finally, we examine the influence of heterogeneous reward on the above three communication structures. Fig.~\ref{fig:hybrid_h} is consistent with our statements in Corollary 3. The CDP-MAB achieves the smallest regret. As we analyzed before, it can completely avoid the local bias by the coordination of the server. The other two structures are slightly affected by the heterogeneous reward. The reason is that the agents perform local exploitation during communication rounds (waiting for consensus or sink agent collection). Due to the smaller size of the sub-graph, the hybrid structure can achieve smaller delay as well as better performance than the decentralized network.}
\begin{figure}[H]
	\begin{minipage}[t]{0.45\textwidth}
		\includegraphics[width=8cm]{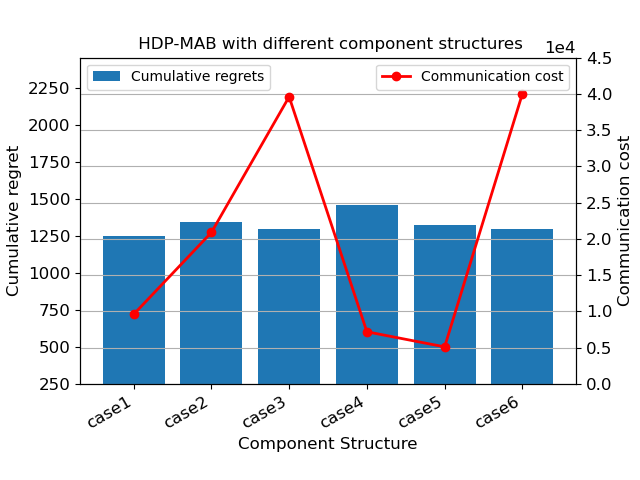}
		\caption{Regrets of HDP-MAB with different component structures.}
		\label{fig:hgraph}
	\end{minipage}\hspace{5mm}
	\begin{minipage}[t]{0.45\textwidth}
		\includegraphics[width=7.5cm]{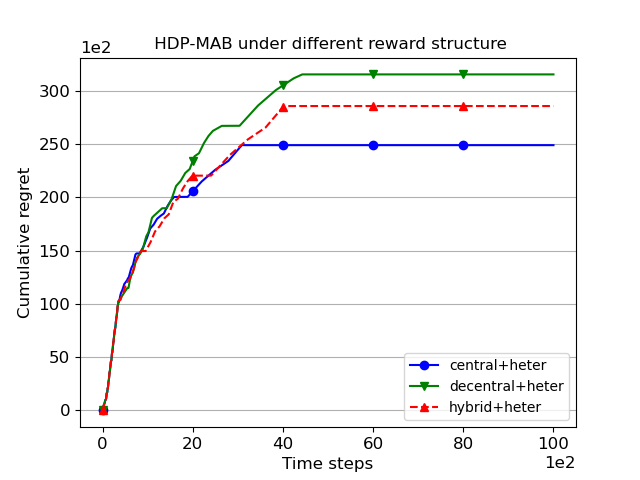}
		\caption{Regrets of HDP-MAB under different reward structures.}
		\label{fig:hybrid_h}
	\end{minipage}\hspace{5mm}
\end{figure}

\section{Conclusion}
In this paper, we propose a privacy-preserving communication-efficient framework to tackle the privacy leakage and large communication overhead issues in the FMAB problem. To protect the user privacy, we use DP techniques by adding noise before the agents send their local parameters. Theoretical results show that the DP mechanism brings a trade-off between privacy and utility. Furthermore, partial participation and less frequent communication strategies are utilized to reduce the communication cost. Decentralized structure is combined with GIS protocol to realize global information synchronization at the end of each communication round. Hybrid structure and SAC protocol are together considered to complete local aggregation inside the component first, and then implement global aggregation at the server. The above two schemes introduce an additional item related to the graph/component structure on the basis of the centralized setting regret. We compared the effectiveness of the three structures from both theoretical and experimental 
results.

%In this paper, we propose a privacy-preserving communication-efficient framework to tackle the privacy leakage and large communication overhead issues in the FMAB problem. To protect the user privacy, we use DP techniques by adding noise before the agents send their local parameters. Theoretical results show that the DP mechanism brings a $\frac{1}{\epsilon}$ trade-off between privacy and utility.  Partial participation and less frequent communication strategies are utilized to reduce the communication cost. In the centralized setting, our CDP-MAB algorithm with partial participation rate $p$ and limited communication rounds $R$ achieves a regret of $(\frac{\Delta^{-2/R} K \log T}{p\min\{\epsilon,\Delta\}})$ with a communication cost of $pMR$. In the decentralized setting, we consider a $d$-regular graph with connection probability $p$ as a special case. Our proposed DDP-MAB algorithm achieves a $O(\frac{M\Delta^{-2/R}\log T}{(1+pd)\min\{\epsilon,\Delta\}})$ regret with a  $\frac{pdMR}{2}$ communication cost. We finally propose the HDP-MAB algorithm, where each component is a connected graph and can communicate with other components through the server. Results show that, the component with more agents and strong connections can achieve a smaller per-agent regret. We finally verified the above theoretical results by conducting extensive experiments.
\section*{Acknowledgement}
We would like to thank Prof. Christina Fragouli from UCLA for her valuable discussions and comments when we were conducting this work. We also thank Guangfeng Yan for his insightful discussion for the theoretical proofs.

%\newpage
\bibliography{jsacref}
%\newpage
\appendix
\section{Appendix}
\subsection{Useful facts}
\begin{fact}[Chernoff-Hoeffding bound]
	Let $X_1,...,X_t$ be a sequence of real-valued random variables with common range $[0,1]$, and such that $\mathbb{E}[X_t|X_1,...,X_{t-1}]=\mu$. Let $S_t = \sum_{i=1}^{t}X_i$. Then for all $a\geq 0$,
	\begin{equation}
	P(S_t\geq t\mu ~ +~a)~\leq~e^{-2a^2/t},P(S_t\leq t\mu ~ -~a)~\leq~e^{-2a^2/t}\notag
	\end{equation}
\end{fact}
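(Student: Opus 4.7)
The plan is to prove the Chernoff-Hoeffding bound via the standard moment-generating function (MGF) argument, adapted to the conditional-mean (martingale difference) setting that the statement allows. I will first show the upper tail $\Pr(S_t \geq t\mu + a) \leq e^{-2a^2/t}$, and then recover the lower tail by symmetry applied to the sequence $\{1-X_i\}$ (which again has values in $[0,1]$ and conditional mean $1-\mu$).

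First I would apply the exponential Markov (Chernoff) inequality: for any $s>0$,
\begin{equation}
\Pr(S_t - t\mu \geq a) = \Pr\bigl(e^{s(S_t - t\mu)} \geq e^{sa}\bigr) \leq e^{-sa}\,\mathbb{E}\bigl[e^{s(S_t - t\mu)}\bigr].
\end{equation}
Then I would peel off the last term using the tower property, writing $e^{s(S_t-t\mu)} = e^{s(S_{t-1}-(t-1)\mu)} \cdot e^{s(X_t-\mu)}$ and conditioning on $\mathcal{F}_{t-1}=\sigma(X_1,\ldots,X_{t-1})$:
\begin{equation}
\mathbb{E}\bigl[e^{s(S_t-t\mu)}\bigr] = \mathbb{E}\Bigl[e^{s(S_{t-1}-(t-1)\mu)}\,\mathbb{E}\bigl[e^{s(X_t-\mu)}\mid \mathcal{F}_{t-1}\bigr]\Bigr].
\end{equation}

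The main technical step (and the key obstacle) is Hoeffding's lemma in its conditional form: for a random variable $Y$ with $Y \in [\alpha,\beta]$ and $\mathbb{E}[Y\mid \mathcal{F}] = 0$, one has $\mathbb{E}[e^{sY}\mid \mathcal{F}] \leq e^{s^2(\beta-\alpha)^2/8}$. Here $Y = X_t - \mu$ satisfies $Y \in [-\mu, 1-\mu]$, a range of length $1$, and has conditional mean $0$ by hypothesis. I would prove this lemma by the classical convexity trick: since $e^{sx}$ is convex, $e^{sy} \leq \tfrac{\beta-y}{\beta-\alpha}e^{s\alpha} + \tfrac{y-\alpha}{\beta-\alpha}e^{s\beta}$ for $y \in [\alpha,\beta]$; taking conditional expectations and using $\mathbb{E}[Y\mid \mathcal{F}]=0$ reduces the bound to a one-variable function $\varphi(s) := \log\bigl(\tfrac{\beta}{\beta-\alpha}e^{s\alpha} + \tfrac{-\alpha}{\beta-\alpha}e^{s\beta}\bigr)$; a direct computation of $\varphi(0)$, $\varphi'(0)$ and a uniform bound $\varphi''(s)\leq (\beta-\alpha)^2/4$ combined with a second-order Taylor expansion yields $\varphi(s)\leq s^2(\beta-\alpha)^2/8$, hence $\mathbb{E}[e^{sY}\mid \mathcal{F}]\leq e^{s^2/8}$.

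Substituting this conditional bound and iterating the tower property $t$ times gives $\mathbb{E}[e^{s(S_t-t\mu)}] \leq e^{ts^2/8}$, so that $\Pr(S_t - t\mu \geq a) \leq e^{-sa + ts^2/8}$. I would then optimize over $s>0$ by setting $s = 4a/t$, producing the exponent $-2a^2/t$ and hence the claimed bound. For the lower tail, I would apply the same argument to $\tilde{X}_i := 1 - X_i$: these still lie in $[0,1]$ and satisfy $\mathbb{E}[\tilde{X}_i\mid \mathcal{F}_{i-1}] = 1-\mu$, and $\{S_t \leq t\mu - a\} = \{\sum_i \tilde{X}_i \geq t(1-\mu) + a\}$, so the upper-tail bound transfers verbatim. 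The main obstacle is the proof of Hoeffding's lemma; the rest is routine MGF bookkeeping and optimization.
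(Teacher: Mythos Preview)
Your proof is correct and is exactly the standard argument for the Chernoff--Hoeffding (Azuma--Hoeffding) bound: Chernoff's exponential Markov inequality, the tower property to peel off one term at a time, Hoeffding's lemma on the centered conditional increment, iteration, and optimization in $s$. The lower-tail reduction via $\tilde X_i = 1 - X_i$ is also fine.

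There is, however, nothing in the paper to compare it to: the paper lists this result as a ``Fact'' in the appendix and uses it as a black box, without giving any proof. So your write-up is not an alternative to the paper's proof but rather a self-contained justification of a result the paper simply cites. If anything, the one point worth tightening is that the statement is really the Azuma--Hoeffding version (only conditional means are assumed, not independence), and your argument correctly handles this via the conditional form of Hoeffding's lemma.
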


\begin{fact}[Concentration Bound of Laplace Distribution \cite{chan2011private}]
% 	 1) A Laplace random variable $X \sim Lap(\lambda)$ is sampled from a distribution with PDF $l(x|\lambda) = \frac{1}{2\lambda}e^{-\frac{|x|}{\lambda}}$. The  mean of $Lap(\lambda)$ is $0$ and the variance is $2\lambda^2$. For any $\tau>0$ it holds that $\Pr\{|X|>\tau\} = e^{\frac{-\tau}{\lambda}}$. 

Let $X_1,X_2,...,X_n$ be i.i.d random variables following the $Lap(\lambda)$ distribution. Let $S_n =\sum_{i=1}^n X_i$ be the sum of $n$ variables. Then, for any $\nu\geq \lambda \sqrt{n}$ and $0<a<\frac{\sqrt{8}\nu^2}{\lambda}$, we have,
\begin{equation}
\Pr\{S_n \ge a\}<e^{\frac{a^2}{-8\nu^2}}\notag
\end{equation}

% The first concentration bound is applied on the single random variable sampled from the Laplace distribution with parameter $\lambda$, while the latter one is used to bound the sum of $n$ random variables sample from the same distribution.	
\end{fact}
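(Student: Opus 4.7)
The plan is to establish the tail bound via the standard Chernoff--Cram\'er (moment generating function) method, with the optimizing parameter chosen in terms of $\nu$ rather than in terms of $n\lambda^2$, so that the stated hypotheses $\nu\ge\lambda\sqrt{n}$ and $0<a<\sqrt{8}\,\nu^2/\lambda$ appear naturally as the regime of validity.

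First I would record the moment generating function of a single $X_i\sim\mathrm{Lap}(\lambda)$:
\begin{equation*}
\mathbb{E}\!\left[e^{tX_i}\right]=\frac{1}{1-\lambda^{2}t^{2}}, \qquad |t|<1/\lambda.
\end{equation*}
By independence, $\mathbb{E}[e^{tS_n}]=(1-\lambda^{2}t^{2})^{-n}$. Applying Markov's inequality to $e^{tS_n}$ for $t>0$ gives, for every admissible $t$,
\begin{equation*}
\Pr\{S_n\ge a\}\le e^{-ta}\,(1-\lambda^{2}t^{2})^{-n}.
\end{equation*}

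Next I would invoke the elementary inequality $(1-x)^{-1}\le e^{2x}$, valid for $x\in[0,1/2]$, applied with $x=\lambda^{2}t^{2}$. This yields the subgaussian-type bound
\begin{equation*}
\Pr\{S_n\ge a\}\le \exp\!\left(-ta+2n\lambda^{2}t^{2}\right),
\end{equation*}
provided $\lambda^{2}t^{2}\le 1/2$. Rather than optimize $t$ against $n\lambda^2$ (which would produce $n\lambda^2$ in the exponent), I would instead choose $t=a/(4\nu^{2})$. This choice is where the hypotheses on $\nu$ and $a$ get used: the constraint $\lambda^{2}t^{2}\le 1/2$ becomes $a^{2}\lambda^{2}\le 8\nu^{4}$, i.e.\ $a<\sqrt{8}\,\nu^{2}/\lambda$, which is precisely the second hypothesis.

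Finally I would substitute $t=a/(4\nu^2)$ into the exponent and use $\nu^{2}\ge n\lambda^{2}$ (the first hypothesis) to drop the residual $n\lambda^2$ in favor of $\nu^2$:
\begin{equation*}
-ta+2n\lambda^{2}t^{2} \;=\; -\frac{a^{2}}{4\nu^{2}}+\frac{n\lambda^{2}a^{2}}{8\nu^{4}} \;\le\; -\frac{a^{2}}{4\nu^{2}}+\frac{a^{2}}{8\nu^{2}} \;=\; -\frac{a^{2}}{8\nu^{2}},
\end{equation*}
yielding $\Pr\{S_n\ge a\}< e^{-a^{2}/(8\nu^{2})}$ as claimed. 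The main subtlety, and the part I would be most careful about, is the coupling between the choice of $t$ and the two hypotheses: a naive optimization in $t$ gives the exponent $-a^{2}/(8n\lambda^{2})$ under the condition $a\le\sqrt{8}\,n\lambda$, which is not exactly what the statement asks for; the trick is to let $\nu$ drive the parameter tuning so that $\nu$ controls both the admissible range of $a$ and the final exponent, with $\nu\ge\lambda\sqrt{n}$ used only at the very last step to absorb the cross term.
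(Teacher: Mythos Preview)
Your proof is correct and is precisely the standard Chernoff--Cram\'er argument for Laplace tails; the paper does not actually supply its own proof of this Fact but cites it from \cite{chan2011private}, where the same MGF-plus-$(1-x)^{-1}\le e^{2x}$ route is used. One small point worth making explicit: the strict inequality in the conclusion follows because the hypothesis $a<\sqrt{8}\,\nu^{2}/\lambda$ is strict, so $\lambda^{2}t^{2}<1/2$ and hence $(1-\lambda^{2}t^{2})^{-1}<e^{2\lambda^{2}t^{2}}$ strictly.
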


\subsection{Proof of Theorem 1}
\begin{proof}
	
	Consider two streams of arm-rewards that differ on the reward of a single arm in a single timestep. This timestep plays a role in a single epoch $r$. Moreover, let a be the arm whose reward differs between the two neighboring streams. Since the reward of each arm is bounded by [0,1] it follows that the difference of the mean of arm a between the two neighboring streams is less than $\frac{1}{S(r) - S(r-1)}$. Thus, adding noise of $Lap(\frac{1}{M\epsilon [S(r) - S(r-1)]})$ to $\hat{x}_{i,k}(r)$ guarantees $M\epsilon$-DP.

	The regret incurred by Algorithm 1 can be decomposed by the local exploration of each suboptimal arm $k$ before it it is eliminated by the central server. We define $r_k$ to be the epoch up to which $\Delta_k$ exceeds $2\widetilde{\Delta}_{r_k} = 2^{-r_k+1}$. We then show that after round  $r_k$, arm $k$ will be eliminated properly with high probability. 
	
	\textbf{Step 1}: 
	We first define the event: $E_R = \{\forall k, r, |\bar{y}_k(r)-\mu_k| \leq {C(r)}\}$ for all arm $k$ in all epoch $r$ and then bound the probability it happens.  
	
		\textbf{Step 1.1}: We decompose $\bar{y}_k(r)$ as
		\begin{align}
        \bar{y}_k(r) &=\cfrac{1}{M} \sum_{i=1}^M\bar{y}_{i,k}(r) = \cfrac{1}{M}\sum_{i=1}^M \Bigg\{\frac{S(r-1)}{S(r)}\bar{y}_{i,k}(r - 1) + \frac{S(r)-S(r-1)}{S(r)}\hat{y}_{i,k}(r)\Bigg\}\nonumber\\
        &=\cfrac{1}{M}\sum_{i=1}^M\sum_{j=1}^r \frac{S(j)-S(j-1)}{S(r)} \hat{y}_{i,k}(j)\\
        &=\cfrac{1}{M}\sum_{i=1}^M\sum_{j=1}^r \frac{S(j)-S(j-1)}{S(r)} \hat{x}_{i,k}(j) + \cfrac{1}{M}\sum_{i=1}^M\sum_{j=1}^r \frac{S(j)-S(j-1)}{S(r)} l_{i,k}(j)\nonumber\\
        &\triangleq \bar{x}_k(r) + l_k(r)\nonumber
        \end{align}
		where $\bar{x}_k(r) = \frac{1}{M}\sum_{i=1}^M\sum_{j=1}^r \frac{S(j)-S(j-1)}{S(r)} \hat{x}_{i,k}(j)$, which is averaged over $MS(r)$ samples of arm $k$. $l_k(r) = \frac{1}{M}\sum_{i=1}^M\sum_{j=1}^r \frac{S(j)-S(j-1)}{S(r)} l_{i,k}(j)$, which is the accumulated noise term added on $\bar{x}_k(r)$ at the end of each epoch $r$. We then decompose the elimination threshold $C(r)$ as $C(r) = c(r) + h(r)$ . In particular, $c(r) =\sqrt{\frac{\log (8|I^{(r-1)}|r^2T)}{2MS(r)}},~ h(r) = \frac{r\sqrt{8\log(8Kr^2T)}}{M^{1.5}\epsilon S(r)}$.
		
		Therefore, for arm $k$, after epoch $r$, we have     
		\begin{align}
	\Pr\{|\bar{y}_k(r)-\mu_k| \leq {C(r)}\} 	\geq \Pr\{|\mu_k-\bar{x}_k(r)|\leq {c(r)}\} \cdot \Pr\{|l_k(r)|\leq {h(r)}\}
		\end{align}
	The first term indicates the gap between non-private aggregated estimated mean and the true unknown reward mean. According to the Hoeffding's inequality,
	\begin{align}
		\Pr\{|\bar{x}_k(r)-\mu_k| \geq{c(r)}\} \leq 2e^{{-2MS(r)c(r)^2}}
		= 2e^{{-2MS(r)}\cdot\frac{\log(8|I^{(r-1)}|r^2T)}{2MS(r)}} = \frac{1}{4|I^{(r-1)}|r^2T}
	\end{align}
	Using the union bound for all arms $k$ in $I^{(r-1)}$ and for all epoch $r$, we have
	\begin{equation}
        \Pr\{\forall k, r, |\bar{x}_k(r)-\mu_k| \leq{c(r)}\} \geq 1 - \frac{1}{2T}
    \label{eq:condition1}
	\end{equation} 
	The second term represents the accumulated noise added on $\bar{x}_k(r)$. Since the noise is generated using the Laplace mechanism, we use the concentration property of the Laplace distribution to bound the this term. Specifically, for $l_{i,k}(j)\sim Lap(\frac{1}{M\epsilon [S(j) - S(j-1)]})$, setting $\nu = \frac{\sqrt{M}}{M\epsilon [S(j) - S(j-1)]}$ for any $0<a<\frac{\sqrt{8}\nu^2}{\lambda}$, we want
	
\begin{align}
\Pr\{|\frac{1}{M}\sum_{i=1}^M l_{i,k}(j)|\geq a\}\leq 2e^{\frac{M^2a^2M^2\epsilon^2[S(j) - S(j-1)]^2}{-8M}}  = e^{-\log (4|I^{(j-1)}|j^2T)}= \frac{1}{4|I^{(j-1)}|j^2T}
\end{align}
By solving above equation, we can get $a=\frac{\sqrt{8\log(8|I^{(j-1)}|j^2T)}}{M^{1.5}\epsilon [S(j) - S(j-1)]}$. That is,   

\begin{align}
&\Pr\Big\{|\frac{\sum_{i=1}^M l_{i,k}(j)}{M}|\geq \frac{\sqrt{8/M\log(8|I^{(j-1)}|j^2T)}}{M\epsilon [S(j) - S(j-1)]}\Big\} \le \frac{1}{4|I^{(j-1)}|j^2T}
\end{align}

Using the union bound for all arms $k$ in $I^{(r-1)}$ and for all epoch $r$, we have
\begin{align}
&\Pr\Big\{\exists k, r, |\frac{\sum_{i=1}^M l_{i,k}(r)}{M}|\geq \frac{\sqrt{8/M\log(8|I^{(r-1)}|r^2T)}}{M\epsilon [S(r) - S(r-1)]}\Big\} \le \frac{1}{2T}
\end{align}

Thus, we have probability at least $1-\frac{1}{2T}$, for all arms $k$ in $I^{(r-1)}$ and for all epoch $r$
\begin{align}
    |l_k(r)| &= \cfrac{1}{M}\sum_{i=1}^M\sum_{j=1}^r \frac{S(j)-S(j-1)}{S(r)} |l_{i,k}(j)| \notag\\
    &\le \sum_{j=1}^r\frac{S(j)-S(j-1)}{S(r)}\frac{\sqrt{8\log(8|I^{(j-1)}|j^2T)}}{M^{1.5}\epsilon [S(j) - S(j-1)]} \notag\\
    &\le \frac{r\sqrt{8\log(8Kr^2T)}}{M^{1.5}\epsilon S(r)} \le h(r) \label{eq:condition2}
\end{align}

Combing Eq.~\eqref{eq:condition1} and Eq.~\eqref{eq:condition2}, we have $\Pr\{E_R\} \geq 1- \frac{1}{T}$.
	
\textbf{Step 2}: 	
 We continues the proof under the assumption that $E_R$ holds. 
 
 \textbf{Step 2.1}: We first argue that the optimal arm $1$ is never eliminated  if $E_R$ holds. Indeed, For any epoch $r$ and any arm $k$ in the epoch we have $|\bar{y}_k(r)-\mu_k|\leq c(r)+h(r)$. 	Denote $a^*(r)$ be the arm has the highest private empirical mean in current epoch $r$.	It is easy to verify that $\bar{y}_{a^*(r)}(r)-\bar{y}_1(r)\leq 2C(r)$ since
 \begin{align}
 \bar{y}_{a^*(r)}(r)-\bar{y}_1(r) &\le \bar{y}_{a^*(r)}(r)-\bar{y}_1(r) - (\mu_{a^*(r)} - \mu_1)\notag\\
& \le |(\bar{y}_{a^*(r)}(r) - \mu_{a^*(r)}) - (\bar{y}_1(r) - \mu_1)|\notag\\
& \le |(\bar{y}_{a^*(r)}(r) - \mu_{a^*(r)})| + |(\bar{y}_1(r) - \mu_1)|\notag\\
& \le  2(c(r)+h(r))
 \end{align}
 Thus, the optimal arm is never eliminated. 
 
  \textbf{Step 2.2}: We next argue that if $E_R$ holds, in epoch $r_k$, the algorithm eliminates suboptimal arm $k$ with gap $\Delta_k$ larger than $2\widetilde{\Delta}_{r_k}$. With the choices of $c(r)$,  $h(r)$ and $S(r)$, we have,

	\begin{equation}
\begin{array}{lllll}
&c(r_k) =\sqrt{\frac{\log (8|I^{(r_k-1)}|r_k^2T)}{2MS(r_k)}} \leq \sqrt{ \frac{\log (8|I^{(r_k-1)}|r_k^2T)}{2M\cdot \frac{8\log(8|I^{(r_k-1)}|r_k^2T)}{M\widetilde{\Delta}_{r_k}^2}}}= \widetilde{\Delta}_{r_k}/4 \notag\\

&h(r_k) = \frac{r_k\sqrt{8/M\log(8Kr_k^2T)}}{M\epsilon S(r_k)} \leq \frac{r_k\sqrt{8/M\log(8Kr_k^2T)}}{M\epsilon \frac{8r_k \sqrt{2/M\log(8Kr_k^2T)} }{M\epsilon\widetilde{\Delta}_{r_k}}} = \widetilde{\Delta}_{r_k}/4 
\end{array}
\end{equation}
 Thus, $c(r_k)+h(r_k)\leq \widetilde{\Delta}_{r_k}/2$. So for arm $k$,
\begin{align}
&\bar{y}_k(r_k) + ( c(r_k)+h(r_k)) \overset{(a)}{\leq} \mu_k + 2(c(r_k)+h(r_k)) \notag\overset{(b)}{\leq} \mu_k+\Delta_k - 2(c(r_k)+h(r_k)) \notag\\&\overset{(c)}{=} \mu_1 - 2(c(r_k)+h(r_k))\overset{(d)}{\leq}\bar{y}_1(r_k)  - (c(r_k)+h(r_k)) \leq \bar{y}_{a^*(r_k)}(r_k)  - (c(r_k)+h(r_k))
\end{align}
That is $\bar{y}_k(r_k) \leq \bar{y}_{a^*(r_k)}(r_k)  - 2(c(r_k)+h(r_k)) = \bar{y}_{a^*(r_k)}(r_k)  - 2C(r_k)$, which guarantees that suboptimal arm $k$ is eliminated after round $r_k$. 
In Eq.(19), (a) and (d) use the condition that $E_R$ holds; (b) uses the elimination threshold that $\Delta_k\geq 2\widetilde{\Delta}_{r_k}\geq 4(c(r_k)+h(r_k))$. And (c) is from the fact $\mu_k = \mu_1-\Delta_k$.

  \textbf{Step 3}:  We conclude by computing the total number of arms pulls $n_{i,k}(T)$ required for each suboptimal arm $k$ at each agent $i$. Specifically, arm $k\neq 1$ does not survive round $r_k$ with $ r_k = \left \lceil \log (\frac{1}{\Delta_k}) +1 \right \rceil $ since $\Delta_k \geq 2\widetilde{\Delta}_{r_k}$.
\begin{align}
&n_{i,k}(T)\leq S(r_k) \notag\\
&\leq\max\{\frac{8\log(8|I^{(r_k-1)}|r^2T)}{M\widetilde{\Delta}_{r_k}^2 }, \frac{8r_k \sqrt{2\log(8Kr_k^2T)} }{M^{1.5}\epsilon\widetilde{\Delta}_{r_k}}\}\notag\\
&\leq \max\{\frac{16\log(8K (\left \lceil \log (\frac{1}{\Delta_k}) +1 \right \rceil)^2 T)}{M\Delta_k^2}, \frac{32\left \lceil \log (\frac{1}{\Delta_k}) +1 \right \rceil)\sqrt {2\log (8KT (\left \lceil \log (\frac{1}{\Delta_k}) +1 \right \rceil)^2  )} }{M^{1.5}\epsilon\Delta_k}\notag\\
&=O(\max\{\frac{\log(KT\log T)}{M\Delta_k^2}, \frac{\log T \sqrt{\log(KT\log T)}}{M^{1.5}\epsilon\Delta_k}\})
\end{align} 
The last equality is due to the fact that $\left \lceil \log (\frac{1}{\Delta_k}) +1 \right \rceil < \log T$.
%The first inequality is  due to the fact that the epochs are of doubling growing lengths the total number of pulls for any suboptimal arm is proportional to the length of the epoch in which it eliminated,  resulting in only a constant factor increase to the regret.
When $E_R$ does not hold, the maximum regret is $\Delta_{max}\cdot T\cdot \Pr\{\bar{E_R}\} \leq \Delta_{max}\leq 1$. Therefore, we only consider the regret when $E_R$ holds with probability $1-1/T$. Summing up all $M$ agents and $K$ arms we can conclude,
\begin{align}
&R_{C}(T) = \sum_{i=1}^M\sum_{k=1}^{K}\Delta_k \cdot n_{i,k}(T) = O(\max\{\sum_{k=1}^{K} \frac{\log(KT\log T)}{\Delta_k}, \frac{K\log T \sqrt{\log(KT\log T)}}{\sqrt{M}\epsilon}\})
\end{align}

\end{proof}

\subsection{Proof of Theorem 2}

\begin{proof}
The communication cost $C^{p,R}_C(T)$ can be directly derived from $C_C(T)$ by replacing the participants $M$ as $pM$ and required communication round $O(\log T)$ as fixed $R$. 

We use the similar techniques in Theorem 1 to investigate the regret. The regret incurred by Algorithm 1 can be decomposed by the local exploration of each suboptimal arm $k$ before it it is eliminated by the central server. We define $r_k$ to be the epoch up to which $\Delta_k$ exceeds $2\widetilde{\Delta}_{r_k} = 2 \Delta^{r/R}$. We then show that after round  $r_k$, arm $k$ will be eliminated properly with high probability. Notice the $\widetilde{\Delta}_{r_k} $  used in Theorem 1 is doubling-decreasing, while it is exponentially decreasing with scale $R$ in the modified algorithm. This directly leads to a different $r_k$ required for eliminate arm $k$.

	\textbf{Step 1}: Compared with $S(r)$ set in Alg.1, the  $S^p(r)$ has a scale factor of $\frac{2^{-r}}{p\Delta^{r/R}}$ or $\frac{2^{-r}}{p^{3/2}\Delta^{r/R}}$ for the two terms inside the $\max\{\}$. 
	Since the value of $C^p(r)$ is determined by $S^p(r)$, it is scaled equally with respect to $C(r)$ set in Alg.1. This means that the events the event $E_R = \{\forall k, r, |\bar{y}_k(r)-\mu_k| \leq {C^p(r)}\}$ for all arm $k$ in all epoch $r$ still holds with probability $1-\frac{1}{T}$, where $\bar{y}_k(r) = \frac{1}{N}\sum_{i=1}^N \bar{y}_{i,k}(r)$ is the empirical averaged mean aggregate from $N$ participants.

	\textbf{Step 2}: 	
If $E_R$ holds, in any epoch $r$, two of the following events happens:\\
1) the optimal arm always remains in epoch $r$;\\
2) the algorithm eliminates all suboptimal arms with gap $\Delta_k$ larger than $2\widetilde{\Delta}_{r_k}=2 \Delta^{r/R}$. This also demonstrates that the second best arm with gap $\Delta$ is removed from the active arm set after the $R-$th communication since $\widetilde{\Delta}_R = \Delta^{R/R} = \Delta$. 

We conclude by computing the total number of arms pulls $n_{i,k}(T)$ required for each suboptimal arm $k$ at each agent $i$. Specifically, arm $k\neq 1$ does not survive round $r_k$ with,
	
	\begin{align}
	r_k = \left \lceil  \frac{R(1+\log(\frac{1}{\Delta_k}))}{\log(\frac{1}{\Delta})}\right \rceil 
	\end{align}
Since arm $k$ is still in the active arm set at the end of round $r_k-1$, we have \begin{equation}
\Delta_k < 2\widetilde{\Delta}_{r_k-1} = 2\Delta^{\frac{r_k-1}{R}} = \frac{2\Delta^{\frac{r_k}{R}}}{\Delta^{1/R}} =\frac{2\widetilde{\Delta}_{r_k}}{\Delta^{1/R}}    
\end{equation}

Using $2\widetilde{\Delta}_{r_k} \geq \Delta^{1/R}\cdot \Delta_k$, we can upper bound the number of times arm $k$ was pulled by agent $i$ as: 
	\begin{align}
	%\begin{array}{lllll}	
	%&n_{i,k}(T) \leq \frac{\Delta^{-1/R}\cdot(1-\Delta^{-r_k/R})}{(1-\Delta^{-1/R})\cdot \Delta^{-r_k/R}}\cdot S_k(r_k) \notag\\
%&n_{i,k}(T) \leq \frac{1}{1-\Delta^{1/R}} S_k(r_k)\notag\\
&n_{i,k}(T) \leq S^p_k(r_k) \leq \frac{1}{M}\max \left\lbrace  \frac{8\log(8|I^{(r_k-1)}|r_k^2T)}{p\widetilde{\Delta}_{r_k}^2 }, \frac{8r_k \sqrt{2\log(8Kr_k^2T)} }{\sqrt{M}p^{3/2}\epsilon\widetilde{\Delta}_{r_k}}\right\rbrace  \notag \\
& \leq \frac{1}{M}\max \left\lbrace  
\frac{96 \log(r_k^2KT)}{p{\Delta}_{k}^2\cdot \Delta^{2/R}}, \frac{16r_k \sqrt{6\log(r_k^2KT)}}{\sqrt{M}p^{3/2}\epsilon{\Delta}_{k}\cdot \Delta^{1/R}}	\right\rbrace  \notag \\
& \leq \frac{1}{M}\max \left\lbrace  
\frac{96 \log(R^2KT)}{p{\Delta}_{k}^2\cdot \Delta^{2/R}}, \frac{16R \sqrt{6\log(R^2KT)}}{\sqrt{M}p^{3/2}\epsilon{\Delta}_{k}\cdot \Delta^{1/R}}	\right\rbrace  \notag \\
&= O(\frac{1}{M}\max \left\lbrace  \frac{\Delta^{-2/R}}{p\Delta_k^2} \log (R^2KT),\frac{ \Delta^{-1/R}}{\sqrt{M}p^{3/2}\epsilon \Delta_k} R\sqrt{\log (R^2KT)}\right\rbrace )\notag \\
& \leq O(\frac{1}{M}\max \left\lbrace  \frac{\log (R^2KT)}{p\Delta_k^2} T^{2/R},\frac{R\sqrt{\log (R^2KT)}}{\sqrt{M}p^{3/2}\epsilon \Delta_k}T^{1/R} \right\rbrace )\notag \\
& \leq O(\frac{T^{2/R}}{p^{3/2}M}\max \left\lbrace  \frac{\log (R^2KT)}{\Delta_k^2},\frac{R\sqrt{\log (R^2KT)}}{\sqrt{M}\epsilon \Delta_k} \right\rbrace )
%&\leq \frac{1}{pM}\max\{\frac{192\Delta^{-2/R}}{\Delta_k^2},\frac{32\Delta^{-1/R}}{\Delta_k}\} \log (KT \cdot\frac{R(1+\log(\frac{1}{\Delta_k}))}{\log(\frac{1}{\Delta})})\notag \\
%	&\leq \frac{{1}}{pM}\max\{\frac{192\Delta^{-2/R}}{\Delta_k^2},\frac{32\Delta^{-1/R}}{\Delta_k}\} \log (RKT)\notag \\
%	&=O(\frac{1}{pM} \cdot \max\{\frac{ \Delta^{-2/R}}{\Delta_k^2}, \frac{ \Delta^{-1/R}}{\epsilon \Delta_k}\}\cdot (\log RKT))	
	%\end{array}
	\end{align}
The second last equality is due to the fact $\Delta = O(\frac{1}{T})$. If $\Delta < \frac{1}{T}$, even if we play suboptimal arms for all $T$ slots, we can only incur a regret less than $1$. The last equality is due to $T^{2/R} > T^{1/R}$ and $p>p^{3/2}$.	
%The first inequality is  due to the fact that the epochs are of exponentially growing lengths with scale $\Delta^{-1/R}$. The total number of pulls for any suboptimal arm is proportional to the length of the epoch in which it eliminated,  resulting in  a factor$ \frac{\Delta^{-1/R}\cdot(1-\Delta^{-r_k/R})}{(1-\Delta^{-1/R})}$ increase to the regret. This factor represent the ratio between the sum of the first $r_k-1$ epoch length and  the length of $r_k$-th epoch With no communication constraint, this factor tends to 2. That is, the sum of the first $r_k-1$ epoch length is equivalent to the length of $r_k$-th epoch. However, given the fixed round $R$, this factor only tends to 2 when $R>\log (1/\Delta)$, otherwise, it is less than 2. Thus we use 2 as the upper bound for the number of pulls.
 	
Summing up all $M$ agents and $K$ arms, the regret is upper bounded by,
	\begin{align}
\sum_{i=1}^M\sum_{k=1}^{K}\Delta_k \cdot n_{i,k}(T) = O (\frac{T^{2/R}}{p^{3/2}}\max \left\lbrace  \frac{\log (R^2KT)}{\Delta_k},\frac{R\sqrt{\log (R^2KT)}}{\sqrt{M}\epsilon} \right\rbrace)
	\end{align}
 In order to make our performance not worse than the non-communication case, we need to ensure that $T^{2/R} /p\leq  M\rightarrow R \geq \log T /\log pM$.	
We can conclude the cumulative regret of the modified algorithm by setting the $\min\{M,\frac{T^{1/R}}{p^{3/2}}\}$ operation.
\end{proof}

\subsection{Proof of Theorem 3}

We first investigate $C_{D}(T)$. According to our definition of communication cost, it is determined by the number of connections established in all communication rounds. 

We first analyze the total number of required communication rounds.
Consider that $S(r)$ and $C(r)$ we set in Alg.2 are the same as Alg.1, and the GIS protocol can ensure that each agent can aggregate the same average mean as the server in Alg.1 after communication round $r$. So we can directly use the conclusion in Theorem 1, that is, all agents can synchronously identify the best arm after $\left \lceil \log (\frac{1}{\Delta_k}) +1 \right \rceil $ rounds.

We next examine the number of connections established by each communication round. First, the duration of each round is $t_{\text{delay}}$ time slots. In each time slot, the upper bound of connections is the number of edges in the graph $\sum_{i=1}^M d_i/2$. It is also known that the max $t_{\text{delay}}$ is the diameter of the graph $d_G$, which is the time delay for the two vertices farthest apart on the graph to receive information from each other. Thus, the upper bound of connections established by each communication round is $d_G \sum_{i=1}^M d_i/2$. Multiply this with the total required communication round we can conclude $C_{D}(T)$.

We then calculate the regret $R_{D}(T) $, which can be further divided into two terms as:
\begin{equation}\label{regret5}
O(\underbrace{\sum_{k=1}^{K} \frac{\log(KT\log T)}{\Delta_k}, \frac{K\log T \sqrt{\log(KT\log T)}}{\sqrt{M}\epsilon}}_{(1)}+\underbrace{M(d_G-1)}_{(2)})
 	\end{equation}	
 
\textbf{Step 1}: $R_{D(1)}(T)$ is caused by local explorations of suboptimal arms. Each agent is evenly allocated $ S(r)$ times on each $k\in I_i^{(r-1)}$. Note our GIS protocol ensures that all agents can observe totally $S(r)$ samples at the end of round $r$. Thus, although the agents perform aggregation and elimination independently, they can generate the same aggregated means $\bar{y}_{i,k}(r)$ and the same active arm set $I_i^{(r)}$ for all $i\in[M], k\in[K]$. The advantage of this is to ensure that no additional asynchronous delay is introduced in the next exploration phases. In this way, each agent can be seen as a “central server” in Alg.1 connected with other $M-1$ agents. With the same selection $S(r)$ and $C(r)$, following the analysis of Theorem 1, we can conclude that $R_{D(1)}(T) = R_{C}(T)$. Therefore, the impact of decentralized setting is mainly reflected in the second item.

\textbf{Step 2}: $R_{D(2)}(T)$ is incurred by inappropriate exploitation on suboptimal arms during communication. Before the end of communication round $r$, each agent can only greedily exploit empirically best arm based on local observations in this round. Therefore, The regret introduced at this stage is related to the accuracy of the estimation and $t_{delay}$. 

\textbf{Step 2.1}: The largest $t_{delay}$ will not exceed the diameter of the graph $(d_{G}-1)$, because this is the time slots that the two farthest nodes on the graph need to pass through to receive the observations from each other.

\textbf{Step 2.2}: Next, we examine the regrets introduced in each slot of $t_{delay}$ when exploit suboptimal arm. Recall the proof in Theorem 1, when event $E_R$ holds, suboptimal arm with $\Delta_k>2^{-(r-1)}$ will be eliminated after round $r-1$ (Step 2.2) and the optimal arm is never eliminated (Step 2.1). Therefore, each pull can cause at most $2^{-(r-1)}$ regret and each communication round incurs at most $M(d_{G}-1)2^{-(r-1)}$ regret. Summing up all required round we obtain: 
\begin{eqnarray}
&~&M(d_{G}-1)\sum_{r= 1}^{\left\lceil\log (\frac{1}{\Delta})+1\right\rceil}(2^{-(r-1)}) = M(d_{G}-1) (1+ 1/2+1/4 +...+ 2^{-\log (\frac{1}{\Delta})})\notag\\
&~&=2M(d_{G}-1) (1- 2^{-\log (\frac{1}{\Delta})})=2M(d_{G}-1) (1- \Delta)
\end{eqnarray}
If event $E_R$ does not hold, the max regret that can be caused by each pull is 1.
Then at most $Md_G$ regret will be introduced at round $r$ and $Md_G\left\lceil\log (\frac{1}{\Delta})+1\right\rceil$ will be incurred by all communication rounds. We can bound the $R_{D(2)}$ by: 
\begin{eqnarray}
&~&R_{(2)} = \Pr\{E_R\} (2M(d_{G}-1) (1- \Delta)) + \Pr\{\bar{E}_R\} (M(d_{G}-1))\notag\\
&~&\leq (1-\frac{1}{T}) (2M(d_{G}-1) (1- \Delta)) + (\frac{1}{T}) M(d_{G}-1)\left\lceil\log (\frac{1}{\Delta})+1\right\rceil\notag \\ 
&~&\leq M(d_{G}-1)(2+ \frac{\log T}{T})= O(M(d_{G}-1))
\end{eqnarray}
The last equality is due to the facts $\log T/T \rightarrow 0$ with large $T$. We complete the proof by summing up $R_{D(1)} (T)+ R_{D(2)}(T)$.

\subsection{Proof of corollary 1}

\begin{proof}
The communication cost $C^R_{D}(T)$ can be directly derived $C_{D}(T)$ by replacing the required communication rounds as $R$.

We then calculate the regret $R^R_{D}(T) $, which can be further divided into two terms as:
\begin{equation}\label{regret5}
O(\underbrace{\min\{M,{T^{2/R}}\}\cdot \max \left\lbrace  \sum_{k=1}^{K}\frac{\log (R^2KT)}{\Delta_k} ,\frac{ R\sqrt{\log (R^2KT)}}{\sqrt{M}\epsilon} \right\rbrace  }_{(1)}  + \underbrace{M(d_{G}-1)}_{(2)})
 	\end{equation}	
 
The first term is caused by local explorations of all $M$ agents, which recover the result of Alg.1 $R^R_{D(1)}(T) = R^{p,R}_{C}(T)$ with $p = 1$.

 Next, we examine the second term introduced in each slot of $t_{delay}$ when exploit suboptimal arm. Recall the proof in Theorem 2, when event $E_R$ holds, suboptimal arm with $\Delta_k>2\widetilde{\Delta}_{r_k}=2 \Delta^{r/R}$ will be eliminated after round $r-1$ and the optimal arm is never eliminated (Step 2). Therefore, each pull can cause at most $\Delta^{(r-1)/R}$ regret and each communication round incurs at most $Md_G\Delta^{(r-1)/R}$ regret. Summing up all required round we obtain: 
\begin{eqnarray}
&~&M(d_{G}-1)\sum_{r= 1}^{R}(\Delta^{r/R}) =  M(d_{G}-1) (1+ \Delta^{1/R}+\Delta^{2/R}+...+\Delta^{R/R})\notag\\
&~&=M(d_{G}-1) (\frac{1-\Delta}{1-\Delta^{1/R}})
%=2Md_G (1- \Delta)
\end{eqnarray}
If event $E_R$ does not hold, the max regret that can be caused by each pull is 1.
Then at most $Md_G$ regret will be introduced at round $r$ and $Md_GR$ will be incurred by all communication rounds. We can bound the $R^R_{D(2)}$ by: 
\begin{eqnarray}
&~&R^R_{D(2)}(T) \leq \Pr\{E_R\} (M(d_{G}-1) (\frac{1-\Delta}{1-\Delta^{1/R}})) + \Pr\{\bar{E}_R\} (M(d_{G}-1) R)\notag\\
%&~&\leq (1-\frac{1}{T}) (M(d_{G}-1) (\frac{1-\Delta}{1-\Delta^{1/R}})) + (\frac{1}{T}) M(d_{G}-1)R\notag %\\ 
&~&\leq M(d_{G}-1) (\frac{1-\Delta}{1-\Delta^{1/R}}+ \frac{R}{T}) = O(M(d_{G}-1))
\end{eqnarray}
The last equality is due to the facts: i) $\frac{1-\Delta}{1-\Delta^{1/R}}+ \frac{R}{T}\leq 1$ since $R\geq 1$; ii) $R$ is no larger than $T$.  We complete the proof by summing above two terms. 

\end{proof}

\subsection{Proof of Theorem 4}

\begin{proof}
We first investigate the communication cost. In the local communication of $q$, the connection between $i$ and $SA_q$ holds only when $\hat{y}_{i,k}(r)$ is sent from $i$ to $SA_q$, after that, the connection broken. In the global communication, at most $Q$ connections are required. Therefore the cost in each round is $c_2\sum_{q \in Q}\sum_{i\in [M_q]} sd(i,SA_q) + c_1Q $ where $sd(i,j)$ is the shortest distance between agent $i$ and $j$. Summing all communication round before we identify the best arm, the communication cost $$C_{H}(T) = (c_2\sum_{q \in Q}\sum_{i\in [M_q]} sd(i,SA_q) + c_1Q) \left\lceil\log (\frac{1}{\Delta})+1\right\rceil$$ We conclude the result with the fact  $sd(i,SA_q) \leq d_G^q $ and $\left\lceil\log (\frac{1}{\Delta})+1\right\rceil <\log T$.

The proof of regret is similar as Theorem 3. We also divide the regret as $R_{H(1)}(T)$ and $R_{H(2)}(T)$. The first term is caused by local explorations of all $M$ agents, which recover the result of Alg.1 $R_{H(1)}(T) = R_{C}(T)$. The second term is incurred by greedily pulling the empirical best arm during $t_{\text{delay}}$, which is determined by the slowest component that finish the sink agent collection:$t_{\text{delay}} =  \max_{q\in Q}\{\max_{i\in [M_q]}\{sd(i,SA_q)\}\}$. Obviously, the largest $t_{delay}$ will not exceed the largest diameter of the all graphs $\max_{q\in Q}\{d_G^q-1\}$. Next, we follow Step 2.2 
 to examine the regrets introduced in each slot of $t_{delay}$ when exploit suboptimal arm. Recall the proof in Theorem 1, when event $E_R$ holds, suboptimal arm with $\Delta_k>2^{-(r-1)}$ will be eliminated after round $r-1$ (Step 2.2) and the optimal arm is never eliminated (Step 2.1). Therefore, the incurred regret is upper bounded by: %each pull can cause at most $2^{-(r-1)}$ regret and each communication round incurs at most $M\max_{q\in Q}\{d_G^q-1\}2^{-(r-1)}$ regret. Summing up all required round we obtain: 
\begin{eqnarray}
&~&M\max_{q\in Q}\{d_G^q-1\}\sum_{r= 1}^{\left\lceil\log (\frac{1}{\Delta})+1\right\rceil}(2^{-(r-1)}) = M\max_{q\in Q}\{d_G^q-1\} (1+ 1/2+1/4 +...+ 2^{-\log (\frac{1}{\Delta})})\notag\\
&~&=2M\max_{q\in Q}\{d_G^q-1\} (1- 2^{-\log (\frac{1}{\Delta})})=2M\max_{q\in Q}\{d_G^q-1\} (1- \Delta)
\end{eqnarray}
If event $E_R$ does not hold, the max regret that can be caused by each pull is 1. We have,
\begin{eqnarray}
&~&R_{H(2)}(T)\leq \Pr\{E_R\} (2M\max_{q\in Q}\{d_G^q-1\} (1- \Delta)) + \Pr\{\bar{E}_R\} (M\max_{q\in Q}\{d_G^q-1\})\notag\\
&~&\leq (1-\frac{1}{T}) (2M\max_{q\in Q}\{d_G^q-1\} (1- \Delta)) + (\frac{1}{T}) M\max_{q\in Q}\{d_G^q-1\}\left\lceil\log (\frac{1}{\Delta})+1\right\rceil\notag \\ 
&~&\leq M \max_{q\in Q}\{d_G^q-1\} (2+ \frac{\log T}{T}) = O(Md_G \max_{q\in Q}\{d_G^q-1\})
\end{eqnarray}
The last equality is due to the fact $\frac{\log T}{T}\rightarrow 0$ with large $T$. We complete the proof by summing $R_{H(1)}(T)$ and $R_{H(2)}(T)$.

\end{proof}

\end{document}